\DeclareMathSymbol{\naf}{\mathord}{symbols}{"18}
\newtheorem{theorem}{Theorem}[section] 
\newtheorem{definition}[theorem]{Definition}
\newtheorem{example}[theorem]{Example} 
\newcommand{\TRUE}{{\bf T}}
\newcommand{\FALSE}{{\bf F}}
\newcommand{\true}{{\bf t}}
\newcommand{\false}{{\bf f}}
\newcommand{\T}{\mathsf{true}}
\newcommand{\F}{\mathsf{false}}
\newcommand{\TRES}{\mathsf{T\mbox{\sf -}RES}}
\newcommand{\RES}{\mathsf{RES}}
\newcommand{\ERES}{\mathsf{E\mbox{\sf -}RES}}
\newcommand{\EASPT}{\mathsf{E\mbox{\sf -}ASP\mbox{\sf -}T}}
\newcommand{\ASPT}{\mathsf{ASP\mbox{\sf -}T}}
\newcommand{\union}{\cup}
\newcommand{\isect}{\cap}
\newcommand{\pair}[2]{\langle{#1},{#2}\rangle}
\newcommand{\GLred}[2]{{#1}^{#2}}
\newcommand{\IF}{\leftarrow}
\newcommand{\rsep}{.\;}
\newcommand{\lpeq}[1]{\equiv_{\mathrm{#1}}}
\newcommand{\bottom}[2]{\mathsf{bottom}(#1,#2)}
\newcommand{\rest}[3]{\mathsf{eval}(\topp{#1}{#2},#3)}
\newcommand{\topp}[2]{\mathsf{top}(#1,#2)}
\newcommand{\dep}[1]{\mathsf{Dep}^+\!(#1)}
\newcommand{\PHP}{{\rm PHP}_n^{n+1}}
\newcommand{\EXT}{{\rm EXT}^{l}}
\newcommand{\CPHP}{{\rm CPHP}_n^{n+1}}
\newcommand{\EPHP}{{\rm EPHP}_n^{n+1}}
\newcommand{\E}{{\rm E}}
\newcommand{\LOOP}{\mathsf{loop}}
\newcommand{\eb}{\mathsf{eb}}
\newcommand{\prog}{\Pi}
\newcommand{\visible}{\mathsf{v}}
\newcommand{\hidden}{\mathsf{h}}
\newcommand{\red}{\mathsf{red}}
\newcommand{\dlits}{\mathsf{dlit}}
\newcommand{\body}{\mathsf{body}}
\newcommand{\head}{\mathsf{head}}
\newcommand{\rules}{\mathsf{rule}}
\newcommand{\comp}{\mathsf{comp}}
\newcommand{\nlp}{\mathsf{nlp}}
\newcommand{\vars}{\mathsf{var}}
\newcommand{\clauset}{\mathcal{C}}
\newcommand{\atoms}{\mathsf{atom}}
\newcommand{\smodels}{\mathsf{smodels}}
\newcommand{\nomore}{\mathsf{noMore}}
\newcommand{\clasp}{\mathsf{clasp}}
\newcommand{\cmodels}{\mathsf{cmodels}}
\newcommand{\dlv}{\mathsf{dlv}}
\newcommand{\assat}{\mathsf{assat}}
\title[Theory and Practice of Logic Programming]{
Extended ASP Tableaux and\\Rule Redundancy in Normal Logic
Programs\thanks{This is an extended version of a paper 
\protect\cite{JarvisaloO:ICLP07} 
presented at the 23rd International Conference on
Logic Programming (ICLP 2007) in Porto, Portugal.}}
\author[M. J\"arvisalo and E. Oikarinen]
{MATTI J\"ARVISALO and EMILIA OIKARINEN \\
 Helsinki University of Technology (TKK) \\
 Department of Information and Computer Science\\
 P.O. Box 5400, FI-02015 TKK, Finland \\
 \email{matti.jarvisalo@tkk.fi, emilia.oikarinen@tkk.fi}}
\begin{document}
\maketitle


\begin{abstract}
We introduce an extended tableau calculus for answer set
programming~(ASP).  
The proof system is based on the ASP tableaux defined in
[Gebser\&Schaub, ICLP 2006], with an added extension rule.
We investigate the power of Extended ASP Tableaux both theoretically
and empirically.
We study  the relationship  of Extended ASP Tableaux with the
Extended Resolution proof system  defined by Tseitin for sets of clauses,
and separate Extended ASP Tableaux from ASP Tableaux by
giving a polynomial-length proof for a family of normal
logic programs $\{\prog_n\}$ 
for which ASP Tableaux has exponential-length minimal
proofs with respect to $n$.
Additionally, Extended ASP Tableaux imply interesting insight into
the effect of program simplification on the lengths of proofs in ASP.
Closely related to Extended ASP Tableaux, we empirically investigate
the effect of redundant rules on the efficiency of ASP solving.
\end{abstract}

\begin{keywords}
Answer set programming, tableau method, extension rule, proof
complexity, problem structure
\end{keywords}


\section{Introduction}

Answer set programming~(ASP)~\cite{%
MT99:slp,%
DBLP:journals/amai/Niemela99,%
DBLP:journals/ai/GelfondL02a,%
Lifschitz02,%
Baral:knowledge}
is a declarative problem solving paradigm which has proven successful
for a variety of knowledge representation and reasoning tasks
(see~\cite{DBLP:conf/asp/SoininenNTS01,%
DBLP:conf/padl/NogueiraBGWB01,%
DBLP:journals/tplp/ErdemLR06,%
Brooks:inferring} 
for examples).
The success has been brought forth by efficient solver
implementations such as
$\smodels$~\cite{DBLP:journals/ai/SimonsNS02},
$\dlv$~\cite{Leone:dlv}, 
$\nomore$++~\cite{DBLP:conf/lpar/AngerGLNS05},
$\cmodels$~\cite{GiunchigliaLM:answer},
$\assat$~\cite{Lin:assat}, and
$\clasp$~\cite{clasp}.
However, there has been an evident lack of theoretical 
studies into the reasons for the efficiency of  ASP solvers.

Solver implementations and their inference techniques can be seen as
deterministic implementations
of the underlying rule-based \emph{proof systems}. A
solver implements a particular proof system in the sense that the
propagation mechanisms applied by the solver apply the deterministic
deduction rules in the proof system, whereas 
the nondeterministic branching/splitting rule of the proof system
is made deterministic
through branching heuristics
present in typical solvers.
From the opposite point of view, a solver can be analyzed by investigating
the power of an abstraction of the solver as the proof system
the solver implements.
Due to this strong interplay between theory and practice, the study of
the relative efficiency of these proof systems reveals important new
viewpoints and explanations for the successes and failures of
particular solver techniques. 

A way of examining the \emph{best-case} performance of solver
algorithms is provided by \emph{(propositional) proof complexity
  theory}~\cite{Cook:relative,Beame:Propositional}, which concentrates
on studying the relative power of the proof systems underlying solver
algorithms in terms of the shortest existing proofs in the systems.
A large (superpolynomial) difference in the minimal length of proofs 
available in different proof systems for a family of Boolean
expressions reveals that solver implementations of these systems are
inherently different in strength.
While such proof complexity theoretic studies are frequent in the
closely related field of propositional satisfiability (SAT), where
typical solvers have been shown to be based on refinements of the
well-known Resolution proof system~\cite{Beame:understanding}, this
has not been the case for ASP.  
Especially, the inference techniques applied in current
state-of-the-art ASP solvers have been characterized by a family of
tableau-style ASP proof systems for normal logic programs only very
recently~\cite{DBLP:conf/iclp/GebserS06}, with some related proof
complexity theoretic investigations~\cite{DBLP:conf/ecai/AngerGJS06}
and generalizations~\cite{DBLP:conf/iclp/GebserS07}. 
The close relation of ASP and SAT and the respective theoretical
underpinning of practical solver techniques has also received little
attention up until
recently~\cite{DBLP:conf/iclp/GiunchigliaM05,gebsch06d}, 
although the fields could gain much by further studies on these
connections.

This work continues in part bridging the gap between ASP and SAT.
Influenced by Tseitin's \emph{Extended Resolution} proof
system~\cite{Tseitin:complexity} for clausal formulas, we introduce
\emph{Extended ASP Tableaux}, an extended tableau calculus based on
the proof system in~\cite{DBLP:conf/iclp/GebserS06}. 
The motivations for Extended ASP Tableaux are many-fold. 
Theoretically, Extended Resolution has proven to be among the most
powerful known proof systems, equivalent to, for example, extended Frege
systems; no exponential lower bounds for the lengths of proofs are
known for Extended Resolution.
We study the power of Extended ASP Tableaux, showing a tight
correspondence with Extended Resolution.

The contributions of this work are not only of theoretical nature.
Extended ASP Tableaux is in fact based on \emph{adding structure} into
programs by introducing additional \emph{redundant rules}.
On the practical level, the structure of problem instances 
has an important role in both ASP and SAT solving.
Typically, it is widely believed that redundancy can and should be
removed for practical efficiency.
However, the power of Extended ASP Tableaux reveals that this is not
generally the case, and such redundancy removing \emph{simplification}
mechanisms can drastically hinder efficiency.
In addition, we contribute by studying the effect of redundancy on the
efficiency of  a variety of ASP solvers.
The results show that the role of redundancy in programs is not as
simple as typically believed, and  controlled addition of redundancy
may in fact prove to be relevant in further strengthening the
robustness of current solver techniques.

The rest of this article is organized as follows. 
After preliminaries on ASP and SAT (Section~\ref{prel}), the
relationship of Resolution and ASP Tableaux proof systems and
concepts related to the complexity of proofs are discussed 
(Section~\ref{systems}).
By introducing the Extended ASP Tableaux proof system
(Section~\ref{easp}), proof complexity and simplification are then 
studied with respect to Extended ASP Tableaux (Section~\ref{comp}).
Experimental results related to Extended ASP Tableaux and redundant
rules in normal logic programs are presented in
Section~\ref{experiments}. 


\section{Preliminaries}
\label{prel}
As preliminaries we review basic concepts related to answer set
programming (ASP) in the context of  normal logic programs,
propositional satisfiability (SAT), and translations between ASP and
SAT. 
\subsection{Normal Logic Programs and Stable Models}
\label{programs}
We consider {\em normal logic programs} (NLPs) in the {\em
  propositional} case.
In the following we will review some standard concepts related to NLPs
and stable models.  

A normal logic program~$\prog$ consists of a finite set of rules of
the form  
\begin{equation}
\label{rule}
r \ : \ h\IF a_1,\ldots, a_n,\naf b_1,\ldots, \naf b_m,
\end{equation}
where each $a_i$ and $b_j$ is a propositional atom, and $h$ is either
a propositional atom, or the symbol $\bot$ that stands for falsity. 
A rule $r$ consists of a \emph{head}, $\head(r)=h$, and a \emph{body},
$\body(r) = \{a_1,\ldots,a_n, \naf b_1, \ldots,\naf b_m\}$.
The symbol~``$\naf$'' denotes {\em default negation}. 
A {\em default literal} is an atom $a$, or its default negation
$\naf a$. 

The set of atoms occurring in a program~$\prog$ is $\atoms(\prog)$, and
$$\dlits(\prog) = \{a,\naf a \mid a \in \atoms(\prog)\}$$
 is the set of default literals in $\prog$. 
We use the shorthands $L^+=\{a\mid a\in L\}$ and  $L^-=\{a\mid \naf
a\in L\}$ for a set~$L$ of default literals, and $\naf A=\{\naf a\mid
a\in A\}$ for a set~$A$ of atoms.
This allows the shorthand 
$$\head(r) \IF \body(r)^+ \cup \naf \body(r)^-$$ for~(\ref{rule}). 
A rule $r$ is a \emph{fact} if $\body(r)=\emptyset$.
Furthermore, we use the shorthands
\begin{eqnarray*}
\head(\prog) &=& \{\head(r)\mid r\in\prog\} \mbox{ and}\\
\body(\prog) &=& \{\body(r)\mid r\in\prog\}.
\end{eqnarray*}
 
In ASP, we are interested in \emph{stable models}~\cite{GL88:iclp} (or
\emph{answer sets}) of a program $\prog$. 
An \emph{interpretation} $M \subseteq \atoms(\prog)$ 
defines which atoms of $\prog$ are true ($a\in M$) and
which are false ($a\not\in M$).
An interpretation $M\subseteq\atoms({\prog})$ is a \emph{(classical)
model} of $\prog$ if and only if $\body(r)^+\subseteq M$ and
$\body(r)^- \isect M=\emptyset$ imply $\head(r)\in M$ for each rule $r
\in \prog$. 
A model $M$ of a program $\prog$ is a stable model of~$\prog$
if and only if there is no model $M' \subset M$ of $\GLred{\prog}{M},$
where 
$$\GLred{\prog}{M} = \{ \head(r)\IF
\body(r)^+\mid r \in \prog {\rm \; and \; } \body(r)^-\isect M=\emptyset\}$$
is called the \emph{Gelfond-Lifschitz reduct} of $\prog$ with respect
to $M$. 
We say that a program~$\prog$ is \emph{satisfiable} if it has a stable
model, and \emph{unsatisfiable} otherwise.

The \emph{positive dependency graph} of $\prog$, denoted by
$\dep{\prog}$, is a directed graph with $\atoms({\prog})$ and
$$\{\langle b,a \rangle \mid 
\exists r\in\prog\mbox{ such that }b=\head(r)\mbox{ and }a\in
\body(r)^+\}$$ as the sets of vertices and edges, respectively. 
A non-empty set $L\subseteq\atoms(\prog)$ is a loop in $\dep{\prog}$
if for any $a,b\in L$ there is a path of non-zero length from $a$ to
$b$ in $\dep{\prog}$ such that all vertices in the path are in $L$.
We denote by $\LOOP(\prog)$  the set of all loops in $\dep{\prog}$.
A NLP is \emph{tight} if and only if $\LOOP(\prog)=\emptyset$. 
Furthermore,  the \emph{external bodies} of a set $A$ of atoms 
in $\prog$ is 
$$\eb(A) = \{\body(r) \mid r\in \prog,\; \head(r) \in A,\;
\body(r)^+ \cap A =\emptyset\}.$$ 
A set $U\subseteq\atoms(\prog)$ is \emph{unfounded} if
$\eb(U)=\emptyset$. 
We denote the {\em greatest unfounded set}, that is, the union of all 
unfounded sets, of $\prog$ by $\mathsf{gus}(\prog)$.

A {\em splitting set}~\cite{LT94:iclp} for a NLP $\prog$ is any 
set $U\subseteq \atoms(\prog)$ such that for every $r\in\prog$, if 
$\head(r)\in U$, then $\body(r)^+\union \body(r)^-\subseteq U$.
The {\em bottom} of $\prog$ relative to $U$ is 
$$\bottom{\prog}{U}= \{r\in \prog\mid \atoms(\{r\})\subseteq U\},$$
and the {\em top} of $\prog$ relative to $U$ is
$$\topp{\prog}{U}=\prog\setminus\bottom{\prog}{U}.$$  
The top can be partially evaluated with respect to an interpretation 
$X\subseteq U$. The result is a program $\rest{\prog}{U}{X}$ that
contains the rule
$$\head(r)\IF(\body(r)^+\setminus U), \naf(\body(r)^-\setminus U)$$
for each $r\in \topp{\prog}{U}$ such that $\body(r)^+\isect U\subseteq
X$ and $(\body(r)^-\isect U)\isect X =\emptyset$.
Given a splitting set $U$ for a NLP $\prog$, a {\em solution} to $\prog$
with respect to $U$ is a pair~$\pair{X}{Y}$ such that $X\subseteq U$,
$Y\subseteq\atoms(\prog)\setminus U$, $X$ is a stable model of
$\bottom{\prog}{U}$, and $Y$ is a stable model of $\rest{\prog}{U}{X}$.
In this work we will apply the 
{\em splitting set theorem}~\cite{LT94:iclp} 
that relates solutions with stable models. 
\begin{theorem}[\cite{LT94:iclp}]
\label{splitting}
Given a normal logic program $\prog$ and a splitting set $U$ for
$\prog$, 
an interpretation $M\subseteq\atoms(\prog)$ is a stable model of
$\prog$ if and only if $\pair{M\isect U}{M\setminus U}$ is a solution
to $\prog$ with respect to $U$. 
\end{theorem}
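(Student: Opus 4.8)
The plan is to reduce everything to manipulations of the Gelfond--Lifschitz reduct, exploiting the hierarchical separation that the splitting set $U$ imposes. Write $X = M\isect U$ and $Y = M\setminus U$. The first observation is that, since $U$ is a splitting set, every rule $r$ with $\head(r)\in U$ has $\body(r)^+\union\body(r)^-\subseteq U$ and hence lies in $\bottom{\prog}{U}$; consequently no rule of $\topp{\prog}{U}$ has its head in $U$, while every rule of $\bottom{\prog}{U}$ has all of its atoms in $U$. This means that atoms in $U$ can only be produced by bottom rules, and the truth values of the $U$-atoms feed into the top but never the reverse.

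Next I would split the reduct as $\GLred{\prog}{M}=\GLred{\bottom{\prog}{U}}{M}\union\GLred{\topp{\prog}{U}}{M}$. For a bottom rule $r$ we have $\body(r)^-\subseteq U$, so $\body(r)^-\isect M=\body(r)^-\isect X$, and therefore $\GLred{\bottom{\prog}{U}}{M}=\GLred{\bottom{\prog}{U}}{X}$: the bottom reduct depends on $M$ only through $X$. Combined with the fact that no top rule has a head in $U$, this yields the first half of the correspondence, namely that the restriction to $U$ of the least model of $\GLred{\prog}{M}$ is exactly the least model of $\GLred{\bottom{\prog}{U}}{X}$; hence, whenever $M$ is the least model of $\GLred{\prog}{M}$, its $U$-part $X$ is the least model of $\GLred{\bottom{\prog}{U}}{X}$, i.e.\ $X$ is a stable model of $\bottom{\prog}{U}$, and conversely the bottom's stable model fixes the $U$-part.

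The heart of the argument is matching the top against its partial evaluation. Fixing $X$ as the interpretation of the $U$-atoms, I would show that a rule $r\in\topp{\prog}{U}$ contributes the rule $\head(r)\IF(\body(r)^+\setminus U)$ to $\GLred{(\rest{\prog}{U}{X})}{Y}$ if and only if $\body(r)^+\isect U\subseteq X$, $(\body(r)^-\isect U)\isect X=\emptyset$, and $(\body(r)^-\setminus U)\isect Y=\emptyset$. Splitting $\body(r)^-$ into its $U$-part and its non-$U$-part, the last two conditions are together equivalent to $\body(r)^-\isect M=\emptyset$, which is exactly the reduct condition making $r$ contribute $\head(r)\IF\body(r)^+$ to $\GLred{\topp{\prog}{U}}{M}$; the remaining condition $\body(r)^+\isect U\subseteq X$ is automatically met by any rule whose positive body holds in $M$, because $X$ is the least model of the bottom reduct. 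Hence, over the atoms outside $U$, deriving consequences from $\GLred{\prog}{M}$ with the $U$-atoms pinned to $X$ coincides with deriving them from $\GLred{(\rest{\prog}{U}{X})}{Y}$, so $Y=M\setminus U$ is the least model of the latter exactly when $M\setminus U$ is the least model of the former.

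Assembling the two halves gives both directions at once: $M$ is a stable model of $\prog$ if and only if $X$ is a stable model of $\bottom{\prog}{U}$ and $Y$ is a stable model of $\rest{\prog}{U}{X}$, i.e.\ $\pair{X}{Y}$ is a solution; the ``no proper submodel'' form of stability transfers because the $U$-layer and the non-$U$-layer are computed strictly in sequence, so a proper submodel on either side yields one overall and vice versa. I expect the main obstacle to be the third paragraph---the exact bookkeeping that the partial evaluation neither drops nor invents derivations outside $U$, in particular justifying the deletion of the positive $U$-body through the minimality of $X$, and handling rules with head $\bot$ (which act as constraints in both layers) so that satisfiability, and not merely the least-model computation, is preserved across the split.
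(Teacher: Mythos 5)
The paper does not prove this theorem at all: it is quoted, with its citation, from Lifschitz and Turner \cite{LT94:iclp}, and is used later only as a black box (in the soundness proof for Extended ASP Tableaux). So there is no in-paper proof to compare yours against; I can only judge your proposal on its own merits. On those terms, it is a correct outline of the standard direct argument, essentially the Lifschitz--Turner proof specialized from their more general setting (splitting sequences, disjunctive rules, classical negation) to a single splitting set and normal rules. Your three pillars are all sound: the reduct splits rule-by-rule, with $\GLred{\bottom{\prog}{U}}{M}=\GLred{\bottom{\prog}{U}}{X}$ because negative bodies of bottom rules lie inside $U$; the $U$-part of any model of $\GLred{\prog}{M}$ is governed by the bottom reduct alone, because no rule of $\topp{\prog}{U}$ has its head in $U$; and your rule-by-rule matching of $\GLred{\topp{\prog}{U}}{M}$ against $\GLred{(\rest{\prog}{U}{X})}{Y}$ is exactly the right bookkeeping.

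Two corrections, both minor. First, the condition $\body(r)^+\isect U\subseteq X$ is automatic not ``because $X$ is the least model of the bottom reduct'' but simply because $X=M\isect U$ by definition: any rule whose positive body holds in $M$ satisfies $\body(r)^+\isect U\subseteq M\isect U=X$, so no minimality enters at that point. Second, the ``no proper submodel'' transfer that you only gesture at in the last paragraph is where the proof actually closes, and it should be made an explicit case split: if $N\subset M$ is a model of $\GLred{\prog}{M}$, then $N\isect U$ is a model of $\GLred{\bottom{\prog}{U}}{X}$ (bottom reduct rules mention only $U$-atoms), so either $N\isect U\subset X$, contradicting stability of $X$, or $N\isect U=X$ and $N\setminus U\subset Y$, in which case $N\setminus U$ is a model of $\GLred{(\rest{\prog}{U}{X})}{Y}$, contradicting stability of $Y$; the converse direction of the theorem runs the same split backwards. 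Rules with head $\bot$ cause no real trouble: each constraint lands in exactly one of the two layers (the bottom if all its body atoms are in $U$, the top otherwise), and its satisfaction condition decomposes under $M=X\union Y$ in the same way as the reduct conditions you already analyzed.
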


\subsection{Propositional Satisfiability}
Let $X$ be a set of Boolean variables. Associated with every variable
$x\in X$ there are two \emph{literals}, the positive literal, denoted
by $x$, and the negative literal, denoted by~$\bar x$. 
A \emph{clause} is a disjunction of distinct literals. 
We adopt the standard convention of viewing a clause as a finite set of
literals and a \emph{CNF formula} as a finite set of clauses. 
The set of variables appearing in a clause $C$ (a set $\clauset$ of
clauses, respectively) is denoted by
$\vars(C)$ ($\vars(\clauset)$, respectively).

A \emph{truth assignment} $\tau$ associates a truth value
$\tau(x)\in\{\F,\T\}$ with each variable $x\in X$. A truth assignment
\emph{satisfies} a set of clauses if and only if it satisfies every  
clause in it. A clause is satisfied if and only if it contains at
least one satisfied literal, where a literal~$x$ ($\bar x$,
respectively) 
is satisfied if $\tau(x)=\T$ ($\tau(x)=\F$, respectively). 
A set of clauses is \emph{satisfiable} if there is a truth assignment that
satisfies it, and \emph{unsatisfiable} otherwise.
\subsection{SAT as ASP}
There is a natural linear-size translation from sets of clauses to
normal logic programs so that the stable models of the encoding
represent the satisfying truth assignments of the original set of clauses
\emph{faithfully}, that is,
there is a bijective correspondence between the satisfying truth
assignments and stable models of the
translation~\cite{DBLP:journals/amai/Niemela99}. 
Given a set~$\clauset$ of clauses, this translation $\nlp(\clauset)$
introduces a new atom $c$ for each clause $C \in \clauset$, and
atoms~$a_x$ and $\hat a_x$ for each variable~$x \in \vars(\clauset)$. 
The resulting NLP is then 
\begin{eqnarray}
\nlp(\clauset) & = & 
\{a_x \IF \naf \hat a_x\rsep\;\hat a_x \IF \naf a_x\mid
x \in \vars(\clauset) \} \cup \label{nlp1}\\
&&
\{\bot \IF \naf c \mid {C \in \clauset}\}
\cup \label{nlp2}\\
&& 
\{c \IF a_x \mid x\in C,\ C \in \clauset,\ x\in\vars(C)\} \cup
\label{nlp3}\\
&& 
\{c \IF \naf a_x \mid \bar x \in C,\ C \in \clauset,\ 
x\in\vars(C)\}.\label{nlp4}  
\end{eqnarray}
The rules (\ref{nlp1}) encode  that 
each variable must be assigned an unambiguous truth value,
the rules in (\ref{nlp2}) that 
each clause in $\clauset$ must be satisfied,
while (\ref{nlp3}) and (\ref{nlp4}) encode that each clause is satisfied if
at least one of its literals is satisfied.

\begin{example}
\label{ex:clauses2nlp}
The set 
$\clauset  = \{\{x, y\},\{ x,\bar y\}, \{\bar x,  y\}, \{\bar x, \bar
y\} \}$ of clauses is represented by the normal logic program 
\begin{eqnarray*}
\nlp(\clauset) & = &\{\;a_x\IF \naf \hat a_x\rsep \hat a_x\IF \naf a_x\rsep
a_y\IF \naf \hat a_y\rsep\hat a_y\IF \naf a_y\rsep
\\ && \;\;\bot\IF\naf c_1\rsep\bot\IF\naf c_2 \rsep
\bot\IF\naf c_3 \rsep\bot\IF\naf c_4 \rsep \\ &&
\;\;c_1\IF a_x\rsep c_1\IF a_y\rsep
c_2\IF a_x\rsep c_2\IF \naf a_y\rsep \\ &&
\;\;c_3\IF \naf a_x\rsep c_3\IF a_y\rsep
c_4\IF \naf a_x\rsep c_4\IF \naf a_y\;\}.
\end{eqnarray*}
\end{example}

\subsection{ASP as SAT}
Contrarily to the case of translating SAT into ASP,
there is no modular\footnote{Intuitively, for a modular translation, 
adding a set of facts to a program leads to a local change not
  involving the translation of the rest of the
  program~\cite{DBLP:journals/amai/Niemela99}.} 
and faithful translation from normal logic programs to propositional 
logic~\cite{DBLP:journals/amai/Niemela99}.
Moreover, any faithful translation is potentially of exponential size
when additional variables are not
allowed~\cite{DBLP:journals/tocl/LifschitzR06}\footnote{However,
polynomial-size propositional encodings using extra variables are  
known, see~\cite{DBLP:journals/amai/Ben-EliyahuD94,%
DBLP:conf/ijcai/LinZ03,%
DBLP:journals/jancl/Janhunen06}.
Also, ASP as Propositional Satisfiability approaches for solving normal logic
programs have been developed, for example, $\assat$~\cite{Lin:assat}
 (based on incrementally adding---possibly exponentially many---loop
formulas) and \textsf{asp-sat}~\cite{GiunchigliaLM:answer} 
(based on generating a \emph{supported model}~\cite{DBLP:conf/lpnmr/BrassD95} 
of the program and testing its minimality---thus avoiding exponential
space consumption).  
}.
However, for any tight program $\prog$ it holds that the answer sets
of $\prog$ can be characterized faithfully by the 
satisfying truth assignments
of a~linear-size propositional formula called \emph{Clark's
completion}~\cite{Clark:negation,Fages94} 
of $\prog$, defined using a Boolean variable~$x_a$ for each $a \in
\atoms(\prog)$ as 
\begin{eqnarray}
C(\prog) & = &   \bigwedge_{h \in \atoms(\prog)\union\{\bot\}}
\Bigg (x_h \leftrightarrow
\bigvee_{r \in \rules(h)} \Bigg ( \bigwedge_{b \in \body(r)^+} x_b  \wedge
\bigwedge_{b \in \body(r)^-} \bar x_b \Bigg ) \Bigg), \label{clarks}
\end{eqnarray}
where $\rules(h) = \{r\in \prog \mid \head(r)=h\}$.
Notice that there are the special cases
(i)~if~$h$ is $\bot$ then the equivalence becomes the negation of the right
hand side, (ii)~if $h$ is a fact, then the equivalence reduces to
the clause $\{x_h\}$, and (iii)~if an atom $h$ does not appear in the head 
of any rule then the equivalence
reduces to the clause~$\{\bar x_h\}$.

In this work, we will consider the clausal
representation of Boolean formulas.
A~linear-size clausal translation of $C(\prog)$ is achieved by introducing
additionally a new Boolean variable $x_B$ for each $B \in
\body(\prog)$.
Using the new variables for the bodies, we arrive at the \emph{clausal
  completion} %
\begin{eqnarray}
\comp(\prog) &\!\!\!=\!\! &  
\bigcup_{B \in \body(\prog)}    
\Bigg \{x_B\equiv\bigwedge_{a \in B^+}x_a \wedge \bigwedge_{b \in
  B^-} \bar x_b\Bigg \} \cup 
\bigcup_{B \in \body(\rules(\bot))} \{\{\bar x_B\}\}
\label{comp1}\\
&& \cup
\bigcup_{
h \in \head(\prog) \setminus \{\bot\}}
\Bigg \{x_h \equiv \bigvee_{B \in \body(\rules(h)) } x_B \Bigg \} 
\label{comp2}\\ 
&& \cup
\bigcup_{a \in \atoms(\prog) \setminus \head(\prog)} \{\{\bar x_a\}\},
\label{comp3} 
\end{eqnarray}
where the shorthands $x \equiv \bigwedge_{x_i \in X} x_i$ and 
$x \equiv \bigvee_{x_i \in X} x_i$ stand for the sets of clauses 
$\{x, \bar x_1, \ldots, \bar x_n\} \cup \bigcup_{x_i\in X} \{\bar x, x_i\}$
and
$
\bigcup_{x_i\in X} \{x, \bar x_i\} \cup 
\{\bar x, x_1, \ldots, x_n\}$, 
respectively.

\begin{example}
\label{ex:completion}
For the normal logic program 
$\Pi = \{a\IF b, \naf a.\; b\IF c.\; c\IF\naf b\}$, the
 clausal completion is
\begin{eqnarray*}
\comp(\Pi) & = &\{
\{x_{\{b, \naf a\}},x_a,\bar{x}_b\},
\{\bar{x}_{\{b, \naf a\}},\bar{x}_a\},
\{\bar{x}_{\{b,\naf a\}},x_b\},\\ &&
\;\;
\{x_{\{c\}}, \bar{x}_c\},
\{\bar{x}_{\{c\}}, x_c\},
\{x_{\{\naf b\}}, x_b\},
\{\bar{x}_{\{\naf b\}},\bar{x}_b\},
\{x_a, \bar{x}_{\{b, \naf a\}}\},\\ &&
\;\;\{\bar{x}_a, x_{\{b,\naf a\}}\},
\{x_b,\bar{x}_{\{c\}}\},
\{\bar{x}_b,x_{\{c\}}\},
\{x_c,\bar{x}_{\{\naf b\}}\}\},
\{\bar{x}_c,x_{\{\naf b\}}\}.
\end{eqnarray*}
\end{example}


\section{Proof Systems for ASP and SAT}
\label{systems}

In this section we review concepts related to proof
complexity~\cite{Cook:relative,Beame:Propositional}
in the context of this work, and discuss  the
relationship of Resolution and ASP Tableaux~\cite{DBLP:conf/iclp/GebserS06}.
\subsection{Propositional Proof Systems and  Complexity}
Formally, a \emph{(propositional) proof system} is a polynomial-time
computable predicate~$S$ such that a propositional expression~$E$ is
unsatisfiable if and only if there is a \emph{proof}~$P$ for
which~$S(E,P)$ holds.  
A proof system is thus a polynomial-time procedure for checking the
correctness of proofs in a certain format. 
While proof checking is efficient, finding short proofs may be
difficult, or, generally, impossible since short proofs may not exist 
for a too weak proof system.
As a measure of hardness of proving  unsatisfiability of an
expression~$E$ in a proof system~$S$, the \emph{(proof) complexity}
of~$E$ in~$S$ is the length of the \emph{shortest} proof for~$E$
in~$S$.
For a family~$\{E_n\}$ of unsatisfiable expressions over increasing
number of variables, the (asymptotic) complexity of~$\{E_n\}$ is
measured with respect to the sizes of $E_n$. 

For two proof systems $S$ and $S'$, we say that $S'$ 
\emph{polynomially  simulates} $S$
if for all families $\{E_n\}$ it holds that $C_{S'}(E_n) \le
p(C_S(E_n))$ for all $E_n$,  where $p$ is a polynomial, and $C_S$ and
$C_{S'}$ are the complexities in $S$ and $S'$, respectively.
If $S$ simulates $S'$ and vice versa, then $S$ and $S'$ are
\emph{polynomially equivalent}.
If there is a family $\{E_n\}$ for which $S'$ does not
polynomially simulate $S$, we say that $\{E_n\}$ \emph{separates} $S$
from $S'$. If $S$ simulates $S'$, and there is a family $\{E_n\}$
separating $S$ from $S'$, then $S$ is \emph{more powerful} than $S'$.

\subsection{Resolution}

The well-known Resolution proof system ($\RES$) for sets of clauses
is based on the \emph{resolution rule}. 
Let $C,D$ be clauses, and $x$ a Boolean variable.
The resolution rule states that we can \emph{directly derive}  $C \cup
D$ from  $\{x\} \cup C$ and $\{\bar x\} \cup D$ by \emph{resolving
on}~$x$.  

A \emph{$\RES$ derivation} of a clause $C$ from a set $\clauset$ of
clauses is a sequence of clauses $\pi = (C_1, C_2, \ldots, C_n)$,
where $C_n = C$ and each $C_i$, where $1 \le i < n$, is either
(i)~a~clause in $\clauset$ (an \emph{initial clause}),  
or (ii)~derived with the resolution rule from two clauses $C_j,C_k$,
where $j,k < i$ (a \emph{derived clause}). 
The \emph{length} of $\pi$ is $n$, the number of clauses occurring in
it. 
Any derivation of the empty clause $\emptyset$ from $\clauset$ is a
\emph{$\RES$ proof} for (the unsatisfiability of) $\clauset$.

Any $\RES$ proof $\pi = (C_1, C_2, \ldots, C_n=\emptyset)$ 
can be represented as a directed acyclic graph, in which
the leafs are initial clauses and other nodes  are derived clauses. 
There are edges from $C_i$ and $C_j$ to $C_k$ if and only if~$C_k$ has
been directly derived from $C_i$ and $C_j$ using the resolution rule. 
Many \emph{Resolution refinements}, in which the structure of the
graph representation  is restricted, have been proposed and studied. 
Of particular interest here is
\emph{Tree-like Resolution} ($\TRES$),
in which it is required that proofs are represented by trees.
This implies that a derived clause, if subsequently used multiple
times in the proof, must be derived anew each time from initial clauses.

$\TRES$ is a \emph{proper} $\RES$ refinement, that is, $\RES$ is more
powerful than $\TRES$~\cite{Bensasson:near}.
On the other hand, it is well known that the DPLL
method~\cite{Davis:Computing,Davis:Machine}, the basis of most
state-of-the-art SAT solvers, is polynomially equivalent to $\TRES$.
However, conflict-learning DPLL is more powerful than $\TRES$,  and
polynomially equivalent to $\RES$ under a slight
generalization~\cite{Beame:understanding}. 
\subsection{ASP Tableaux}
Although ASP solvers for normal logic programs have been available
for many years, the deduction rules applied in such  solvers
have only recently been formally defined as a proof system, which we
will here refer to as ASP Tableaux or
$\ASPT$~\cite{DBLP:conf/iclp/GebserS06}. 

An ASP tableau for a NLP $\prog$ is a binary tree of the following
structure. 
The \emph{root} of the tableau consists of the rules $\prog$ and the
\emph{entry} $\FALSE \bot$ for capturing that $\bot$ is always false.
The non-root nodes of the tableau are single \emph{entries} of the form
$\TRUE a$ or~$\FALSE a$, where $a \in \atoms(\prog) \cup
\body(\prog)$.
As typical for tableau methods, entries are generated by
\emph{extending} a \emph{branch} (a path from the root to a leaf node) by
applying one of the rules in Figure~\ref{rules}; if the prerequisites of
a rule hold in a branch, the branch can be extended with the entries
specified by the rule.
For convenience, we use shorthands $\true l$ and $\false l$ for
default literals:
\begin{eqnarray*}
\true l&=&\left\{\begin{array}{@{}l}
\TRUE a, \mbox{ if } l=a\mbox{ is positive,}\\
\FALSE a, \mbox{ if } l=\naf a\mbox{ is negative; and}
  \end{array}\right. \\
\false l&=&\left\{\begin{array}{@{}l}
\TRUE a, \mbox{ if } l=\naf a\mbox{ is negative,}\\
\FALSE a, \mbox{ if } l=a\mbox{ is positive.}
  \end{array}\right.
\end{eqnarray*}

\begin{figure}[!h]
\begin{center}
\begin{tabular}{c}
\begin{minipage}{1.6cm}
$\begin{array}{c|c}
\hline
\TRUE \phi & \FALSE \phi
\end{array}$
\end{minipage}
\begin{minipage}{.5cm}
\quad \quad \quad \quad \quad ($\natural$)
\medskip \ \\
\end{minipage}
 \\
(a) Cut 
\end{tabular}

\ \\
\ \\

\begin{tabular}{c@{}c}
\begin{minipage}{5.5cm}
\centering
$
\begin{array}{c}
h \IF l_1,\ldots,l_n\\
\true l_1,\ldots,\true l_n\\
\hline
\TRUE\{l_1,\ldots,l_n\}
\end{array}
$
\\
(b) Forward True Body
\end{minipage}
&
\begin{minipage}{5.5cm}
\centering
$
\begin{array}{c}
\FALSE\{l_1,\ldots,l_i,\ldots,l_n\} \\
\true l_1,\ldots,\true l_{i-1},\true l_{i+1},\ldots,\true l_n\\
\hline
\false l_i
\end{array}
$
\\
(c) Backward False Body
\end{minipage}
\end{tabular}

\ \\
\ \\

\begin{tabular}{c@{}c}
\begin{minipage}{5.5cm}
\centering
$
\begin{array}{c}
h \IF l_1,\ldots,l_n\\
\TRUE\{l_1,\ldots,l_n\}\\
\hline
\TRUE h
\end{array}
$
\\
(d) Forward True Atom
\end{minipage}
&
\begin{minipage}{5.5cm}
\centering
$
\begin{array}{c}
h \IF l_1,\ldots,l_n\\
\FALSE h\\
\hline
\FALSE\{l_1,\ldots,l_n\}
\end{array}
$
\\
(e) Backward False Atom
\end{minipage}
\end{tabular}

\ \\
\ \\

\begin{tabular}{c@{}c}
\begin{minipage}{5.5cm}
\centering
$
\begin{array}{c}
h \IF l_1,\ldots,l_i,\ldots,l_n\\
\false l_i\\
\hline
\FALSE\{l_1,\ldots,l_i,\ldots,l_n\}
\end{array}
$
\\
(f) Forward False Body
\end{minipage}
&
\begin{minipage}{5.5cm}
\centering
$
\begin{array}{c}
\ \\
\TRUE\{l_1,\ldots,l_i,\ldots,l_n\}\\
\hline
\true l_i
\end{array}
$
\\
(g) Backward True Body
\end{minipage}
\end{tabular}

\ \\
\ \\
\ \\

\begin{tabular}{c@{}c}
\begin{minipage}{6cm}
\centering
\begin{minipage}{1.5cm}
$
\begin{array}{c}
\FALSE B_1,\ldots,\FALSE B_m\\
\hline
\FALSE h
\end{array}
$
\end{minipage}
\quad \quad \quad ($\flat$)
\\
\centering
(h) \ \ \ 
\end{minipage}
&
\begin{minipage}{6.5cm}
\begin{minipage}{4.5cm}
\centering
\vspace{-0.3cm}
$
\begin{array}{c}
\FALSE B_1,\ldots,\FALSE B_{i-1},\FALSE B_{i+1},\ldots,\FALSE B_m\\
\TRUE h\\
\hline
\TRUE B_i
\end{array}
$
\end{minipage}
\quad \quad  ($\sharp$) 
\\
\centering
(i) \ \ \ \ 
\end{minipage} \\
\end{tabular}

\vspace{1ex}
\begin{tabular}{l@{\hspace{-0.5ex}}l}
($\natural$): & Applicable when $\phi \in \atoms(\prog) \cup \body(\prog)$.\\
($\flat$): & Applicable when one of the following conditions holds: \\
&$\S$ (``Forward False Atom''), $\dag$ (``Well-Founded Negation''), or 
$\ddag$ (``Forward Loop'').\\
($\sharp$): & Applicable when one of the following conditions holds: \\
&$\S$ (``Backward True Atom''), $\dag$ (``Well-Founded Justification''), or 
$\ddag$ (``Backward Loop'').\\
($\S$): & Applicable when $\body(\rules(h)) = \{B_1,\ldots,B_m\}$. \\
($\dag$):  &  Applicable when\\
&$\{B_1,\ldots,B_m\}\subseteq\body(\prog)$ and $h\in
\mathsf{gus}\;\!(\{r\in\prog\mid\body(r)\not\in\{B_1,\ldots
B_m\}\})$.\\ 
($\ddag$):& Applicable when $h \in L$, $L \in \LOOP(\prog)$, and
$\eb(L) = \{B_1, \ldots, B_m\}$ all hold.
\end{tabular}
\caption{Rules in ASP Tableaux.}
\label{rules}
\end{center}
\end{figure}

A branch is \emph{closed under} the \emph{deduction rules} (b)--(i) if
the branch cannot be extended
using the rules.
A branch is \emph{contradictory} if there are the entries
$\TRUE a$ and~$\FALSE a$ for some $a$.
A branch is \emph{complete} if it is contradictory, or if there is the
entry $\TRUE a$ or $\FALSE a$ for each $a \in \atoms(\prog) \cup
\body(\prog)$ and the branch is closed
under the deduction rules (b)--(i). 
A tableau is contradictory, if all its branches in are
contradictory, and \emph{non-contradictory} otherwise.
A tableau is complete if all its branches are complete. 
A contradictory tableau from $\prog$ is an $\ASPT$ proof for (the
unsatisfiability of) $\prog$.  
The \emph{length} of an $\ASPT$ proof is the number of entries in it.

\begin{example}
An $\ASPT$ proof for the NLP
$\Pi = \{a\IF b, \naf a.\; b\IF c.\; c\IF\naf b\}$ is shown in 
Figure~\ref{tab-ex}, with the rule applied for
deducing each entry given in parentheses.
For example, the entry $\FALSE a$ has been deduced from 
$a\IF b, \naf a$ in $\prog$ and the entry $\TRUE\{b, \naf a\}$ in the
left branch by applying the rule (g) Backward True Body.
On the other hand, $\TRUE\{b, \naf a\}$ has been deduced from
$a\IF b, \naf a$ in $\prog$ and the entry~$\TRUE a$ 
in the left branch by applying the rule (i$\S$), that is,
rule (i) by the fact that the condition $\S$ ``Backward True Atom'' is
fulfilled (in $\prog$, the only body with atom~$a$ in the head is
$\{b, \naf a\}$). 
The tableau in Figure~\ref{tab-ex} has two closed branches: 
$$(\prog \cup \{\FALSE \bot\}, \TRUE a, \TRUE\{b, \naf a\}, \FALSE a)
\mbox{ and }$$
$$(\prog \cup \{\FALSE \bot\}, \FALSE a, \FALSE\{b,\naf a\}, \FALSE b,
\TRUE\{\naf b\}, \TRUE c, \TRUE\{c\}, \TRUE b).$$
These branches share the common prefix $(\prog \cup \{\FALSE \bot\})$.
\begin{figure}[!h]
\input{asp-tableu-proof.pstex_t}%
\caption{An
  $\ASPT$ proof for $\Pi = \{a\IF b,\naf a.\; b\IF c.\; c\IF\naf b\}$.}
\label{tab-ex}
\end{figure}
\end{example}

Any branch $B$ describes a \emph{partial assignment}
$\mathcal{A}$ on $\atoms(\prog)\union\body(\prog)$ 
in a natural way, that is, if there is an entry $\TRUE a$ ($\FALSE a$,
respectively)  in $B$ for $a\in \atoms(\prog)\union\body(\prog)$, 
then $(a,\T) \in \mathcal{A}$ ($(a,\F) \in
\mathcal{A}$, respectively).
$\ASPT$ is a sound and complete proof system
for normal logic programs, that is, there is a complete
non-contradictory ASP tableau from $\prog$ if and only if $\prog$ is
satisfiable~\cite{DBLP:conf/iclp/GebserS06}. 
Thus the assignment $\mathcal{A}$ described by 
a complete non-contradictory branch gives a stable model 
\mbox{$M = \{a \in \atoms(\prog) \mid (a,\T) \in \mathcal{A}\}$} of $\prog$.

As argued in~\cite{DBLP:conf/iclp/GebserS06}, current ASP solver
implementations are tightly related to $\ASPT$, with the
intuition that the cut rule is made deterministic with decision heuristics,
while the deduction rules describe the propagation mechanism in ASP solvers.
For instance, the
$\nomore$++ system~\cite{DBLP:conf/lpar/AngerGLNS05} 
is a deterministic implementation of the rules
(a)--(g),(h$\S$),(h$\dag$), and~(i$\S$), while
$\smodels$~\cite{DBLP:journals/ai/SimonsNS02} applies
 the same rules with the cut rule 
restricted to~$\atoms(\prog)$.

Interestingly, $\ASPT$ and $\TRES$ are polynomially
equivalent under the translations $\comp$ and $\nlp$.
Although the similarity of unit propagation in DPLL and propagation
in ASP solvers is discussed 
in~\cite{DBLP:conf/iclp/GiunchigliaM05,gebsch06d}, here we want to stress
the direct connection between $\ASPT$ and $\TRES$.
In detail, $\TRES$ and $\ASPT$ are equivalent in the sense that 
(i) given an arbitrary NLP $\prog$, the length of minimal $\TRES$ proofs 
for $\comp(\prog)$ is polynomially bounded in the 
the length of  minimal $\ASPT$ proofs for $\prog$, and
(ii) given an arbitrary set $\clauset$ of clauses, the length of
minimal $\ASPT$ proofs for $\nlp(\clauset)$ is polynomially bounded in
the length of minimal $\TRES$ proofs for $\clauset$.

\begin{theorem} 
\label{tres-asp}
$\TRES$ and $\ASPT$ are polynomially equivalent proof systems 
in the sense that
\begin{itemize}
\item[(i)]
considering tight normal logic programs,
$\TRES$ under the translation $\comp$ polynomially
simulates $\ASPT$, and
\item[(ii)]
considering sets of clauses,
$\ASPT$ under the translation $\nlp$ polynomially simulates
$\TRES$.
\end{itemize}
\end{theorem}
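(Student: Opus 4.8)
The plan is to establish both simulations through the single guiding observation that, for the programs in question, the deduction rules (b)--(i) of $\ASPT$ correspond to \emph{unit propagation} on the completion, while the cut rule corresponds to \emph{branching}; since it is classical that $\TRES$ and DPLL (branching plus unit propagation, measured by search-tree size) are polynomially equivalent, each direction reduces to a rule-by-rule correspondence plus a standard search-tree/tree-resolution translation. Concretely, I would first fix the dictionary between tableau entries and literals: an entry $\TRUE a$ / $\FALSE a$ for $a\in\atoms(\prog)$ is the literal $x_a$/$\bar x_a$, and $\TRUE B$/$\FALSE B$ for $B\in\body(\prog)$ is $x_B$/$\bar x_B$, so that a partial assignment on a tableau branch becomes a partial truth assignment to the variables of $\comp(\prog)$.

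For part~(i) I would show that for a \emph{tight} $\prog$ every deduction step used in an $\ASPT$ proof is reproduced by unit propagation on $\comp(\prog)$. Rules (b)--(g) and the $\S$-variants of (h),(i) each match a \emph{single} clause of $\comp(\prog)$: for instance Forward True Body~(b) and Backward True Body~(g) are unit propagation on the body-defining clauses $\{x_B\}\cup\{\bar x_a : a\in B^+\}\cup\{x_b : b\in B^-\}$ and $\{\bar x_B, x_a\}$; Forward True Atom~(d) and Backward False Atom~(e) use the head clause $\{x_h,\bar x_B\}$; and Forward False Atom~(h$\S$) / Backward True Atom~(i$\S$) use the head-equivalence clause $\{\bar x_h, x_{B_1},\ldots,x_{B_m}\}$. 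The loop rules ($\ddag$) cannot occur since $\LOOP(\prog)=\emptyset$, and the well-founded rules ($\dag$) are the only delicate case: here I would use tightness to falsify the atoms of $\mathsf{gus}(\cdot)$ in positive-dependency (topological) order, showing that each such atom has all its rule bodies already false and is therefore set false by a constant number of unit propagations on $\comp(\prog)$, so a single $\dag$-step costs only polynomially many resolution steps. Having matched every deduction step, the binary tableau becomes a DPLL refutation of $\comp(\prog)$, and I would turn it into a $\TRES$ proof bottom-up: by induction each node $v$ with branch assignment $\mathcal{A}_v$ gets a clause $C_v$ falsified by $\mathcal{A}_v$ --- at a contradictory leaf $C_v$ is the reason clause of the conflict, and at a cut on $\phi$ one resolves the two children's clauses on $x_\phi$ (or keeps the one not mentioning $x_\phi$). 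The root yields the empty clause, and tree-likeness is automatic because each child clause is consumed once.

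For part~(ii) I would start from a $\TRES$ proof of $\clauset$, read as a DPLL/decision-tree refutation, and build an $\ASPT$ tableau for $\nlp(\clauset)$ that mirrors it, cutting on the atoms $a_x$ in place of branching on the variables $x$. The simulation relies on three propagation patterns. First, from the root entry $\FALSE\bot$ and the rules $\bot\IF\naf c$, rules (e) then (c) force $\TRUE c$ for every clause atom $c$, encoding ``every clause must hold''. Second, a unit-propagation step of DPLL (a clause $C$ with all literals but one falsified, forcing the last literal) is reproduced by making the corresponding bodies $\{a_x\}$/$\{\naf a_x\}$ of $c$ false via Forward False Body~(f), then applying Backward True Atom~(i$\S$) to the already-true $c$ to force the surviving body true, and finally Backward True Body~(g) to fix the underlying $a_x$. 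Third, a conflict leaf (where DPLL falsifies some clause $C$) is reproduced because all bodies of its atom $c$ are then false, so Forward False Atom~(h$\S$) gives $\FALSE c$ while $\TRUE c$ is already present --- a closed branch. Each DPLL node thus maps to $O(1)$ tableau entries per clause involved, giving an $\ASPT$ proof polynomial in the $\TRES$ proof and in the size of $\clauset$.

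I expect the main obstacle to be the well-founded rules in part~(i): one must argue that on \emph{tight} programs $\dag$-propagation never outruns unit propagation on $\comp(\prog)$, which is exactly where faithfulness of the completion for tight programs, together with the acyclicity of $\dep{\prog}$ that enables the topological falsification of $\mathsf{gus}(\cdot)$, is essential --- without tightness this fails, matching the restriction in the statement. The secondary difficulty is bookkeeping: verifying that in part~(i) the conflict reason clause at each leaf is genuinely falsified by the branch (it is, since that clause forced a literal whose negation is also present), and that in part~(ii) the indirect propagation through the clause atoms $c$ neither deadlocks nor introduces spurious entries, so that the constant-factor-per-step bounds compose into an overall polynomial simulation.
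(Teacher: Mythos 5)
Your part~(ii) is essentially the paper's own proof: the paper also cuts on $a_x$ following the tree structure of the $\TRES$ refutation (resolving on $x$ becomes a cut on $a_x$, recursing into the two subderivations), and closes each leaf branch exactly by your third pattern --- rule (f) falsifies the singleton bodies of the clause atom $c$, rule (h$\S$) gives $\FALSE c$, and $\TRUE c$ comes from $\FALSE\bot$ and $\bot\IF\naf c$. Your part~(i) also shares the paper's skeleton (turn the tableau into a decision tree, extract a $\TRES$ proof bottom-up by attaching to every node a clause falsified by its branch assignment, resolving on the cut variable at branching nodes). The difference is that the paper replaces \emph{every} deduced entry by a cut (its ``cut tree'') and argues abstractly that each leaf falsifies some clause, whereas you try to match each deduction rule by unit propagation on $\comp(\prog)$.

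That matching claim is where your proof breaks: rule (i$\dag$), Well-Founded Justification, is \emph{not} reproduced by unit propagation on the completion, even for tight programs. Take $\prog=\{h\IF a\rsep\; h\IF\naf b\rsep\; a\IF\naf b\}$ together with rules making $b$ undetermined (e.g.\ $b\IF\naf d\rsep\; d\IF\naf e\rsep\; e\IF\naf d$); $\prog$ is tight. With $\{B_1\}=\{\{\naf b\}\}$, removing the rules whose body is $\{\naf b\}$ leaves only $h\IF a$, in which $\{h,a\}$ is unfounded, so from the single premise $\TRUE h$ rule (i$\dag$) deduces $\TRUE\{\naf b\}$. But unit propagation on $\comp(\prog)$ from $x_h=\T$ derives nothing at all: the only relevant clause $\{\bar x_h, x_{\{a\}}, x_{\{\naf b\}}\}$ still has two unassigned literals, and every other clause is satisfied or non-unit. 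Your topological-falsification argument only fits the forward rule (h$\dag$), where \emph{all} bodies of the unfounded atoms' rules are already false on the branch; in (i$\dag$) only $m-1$ of the $m$ bodies are false, and falsifying the relevant atoms of $\mathsf{gus}(\cdot)$ would require the very entry $\FALSE B_i$ whose negation you are deducing. So ``$\dag$-propagation never outruns unit propagation'' is false, and your DPLL-refutation intermediate object does not exist as described. The repair is to stop insisting on propagation for this one rule: treat each (i$\dag$) conclusion as a cut, and close the flipped side (which has all of $B_1,\ldots,B_m$ false and $\TRUE h$) by your (h$\dag$)-cascade; equivalently, derive by polynomially many resolution steps the cascade clause $\{x_{B_1},\ldots,x_{B_m},\bar x_h\}$ and use it as the reason clause in the bottom-up extraction. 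This is in effect what the paper's cut-tree construction does automatically, since it never needs deductions to be unit propagations --- though the paper's own justification that every leaf of the cut tree falsifies a single clause of $\comp(\prog)$ is itself stated too loosely for exactly the well-founded rules, and the same cascade-derivation fix is what makes it rigorous.
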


In the following we give detailed proofs for the two parts of Theorem 
\ref{tres-asp} followed by illustrating examples.

In the proof of the first part of Theorem \ref{tres-asp}, we use
a concept of a \emph{(binary) cut tree} corresponding to an $\ASPT$
proof.   
Given an $\ASPT$ proof $T$ for a normal logic program $\prog$, 
the corresponding cut tree is obtained as follows.
Starting from the root of $T$, we replace each non-leaf entry
generated by a deduction rule in $T$ by an application of the 
cut rule on the corresponding entry.
For example,
the cut tree~$T'$ corresponding to the $\ASPT$ proof~$T$ in
Figure~\ref{tab-ex} is given in Figure~\ref{tres-ex}~(left).

\begin{proof}[Proof of Theorem \ref{tres-asp}~(i)]
Let $T$ be an $\ASPT$ proof for a tight normal logic program $\prog$.
Without loss of generality, we will assume that branches in $T$ have
not been extended further after they have become contradictory.
We now show that we can construct a $\TRES$ proof~$\pi$ for
$\comp(\prog)$ using the cut tree $T'$ corresponding to $T$.  
Furthermore, we show that for such a proof $\pi$ it holds that, given
any prefix $p$ of an arbitrary branch~$B$ in $T'$ there is a clause $C
\in \pi$ contradictory to the partial assignment in $p$, that is,
there is the entry~$\FALSE a$ ($\TRUE a$) for
$a\in\atoms(\prog)\union\body(\prog)$ in $p$ for each corresponding
positive literal $x_a$ (negative literal~$\bar x_a$) in $C$. 

Consider first the partial assignment in an arbitrary (full) branch
$B$ in $T'$.  
Assume that there is no clause in $\comp(\prog)$ contradictory to the
partial assignment in $B$, that is, we can obtain a truth assignment
$\tau$ based on the entries in $B$ such that every clause in
$\comp(\prog)$ is satisfied in $\tau$. But this leads to contradiction
since~$\comp(\prog)$ is satisfied if and only if $\prog$ is satisfied. 
Thus there is a clause $C \in \comp(\prog)$ contradictory to the
partial assignment in $B$, and we take the clause $C$ into our
resolution proof $\pi$.

Assume that we have constructed $\pi$ such that 
for any prefix $p$ of length $n$ for any branch $B$ in $T'$, 
there is a clause $C \in \pi$ contradictory to the partial assignment
in~$p$. 
Consider an arbitrary prefix $p$ of length $n-1$. 
Now, in $T'$ we have the  prefixes $p'$ and $p''$ of length $n$ which
have been obtained through extending $p$ by applying the cut rule on some
$a\in\atoms(\prog)\union\body(\prog)$.
In other words, $p'$ is $p$ with $\TRUE a$ appended in the end
($p''$ is $p$ with $\FALSE a$ appended in the end).
Since $p'$ ($p''$, respectively) is of length $n$, there is a clause
$C$ ($D$, respectively) in $\pi$ contradictory to the 
partial assignment in $p'$ ($p''$, respectively). 
Now there are two possibilities. If $C=\{\bar x_a\}\union C'$ and
$D=\{x_a\}\union D'$, we can resolve on $x_a$ adding $C'\union D'$ to
$\pi$.
Thus we have a clause~$C'\union D'\in \pi$ contradictory to the
partial assignment in the prefix $p$.
Otherwise we have that $\bar x_a\not\in C$ or $x_a\not\in D$, and  hence
either $C\in \pi$ or $D\in \pi$ is contradictory to the partial
assignment in the prefix $p$.

When reaching the root of~$T'$, we must have derived $\emptyset$ since
it is the only clause contradictory with the empty assignment.  
Furthermore, the $\TRES$ derivation $\pi$ is of polynomial length with
respect to $T'$ (and $T$). 
\end{proof}

The following example illustrates the $\RES$ proof construction used
above in the proof of Theorem \ref{tres-asp}~(i).
\begin{example}
Consider again the tight NLP 
$\Pi = \{a\IF b,\naf a.\; b\IF c.\; c\IF\naf b\}$ from Example
\ref{ex:completion} and the $\ASPT$ proof $T$ for $\prog$ in
Figure~\ref{tab-ex}. 
We now construct a $\TRES$ proof for the completion $\comp(\Pi)$ (see
Example \ref{ex:completion} for details) using the strategy from the
proof of Theorem \ref{tres-asp} (i). 
First, $T$ is transformed into a cut tree~$T'$
given in Figure~\ref{tres-ex} (left).
Consider now the two leftmost branches in $T'$. 
The partial assignment in the branch with entries~$\TRUE a$ and
$\FALSE\{b,\naf a\}$ is contradictory to clause $\{\bar x_a, x_{\{b,\naf
  a\}}\}$ in~$\comp(\prog)$, and the partial assignment in the
branch with entries~$\TRUE a$ and $\TRUE\{b,\naf a\}$ is
contradictory to clause $\{\bar x_{\{b,\naf a\}},\bar x_a\}$ in
$\comp(\prog)$.
Thus we resolve on $x_{\{b,\naf a\}}$ and obtain the clause $\{\bar
x_a\}$, which is contradictory to the single entry~$\TRUE a$ in the
prefix of the two leftmost branches in~$T'$. 
Similarly, we can construct a resolution tree for clause 
$\{x_a\}$ corresponding to the right side of $T'$. 
We finish the proof by resolving on~$x_a$.   
The complete $\TRES$ proof corresponding to the cut tree $T'$ is shown 
in Figure~\ref{tres-ex} (right). 
\end{example}
\begin{figure}
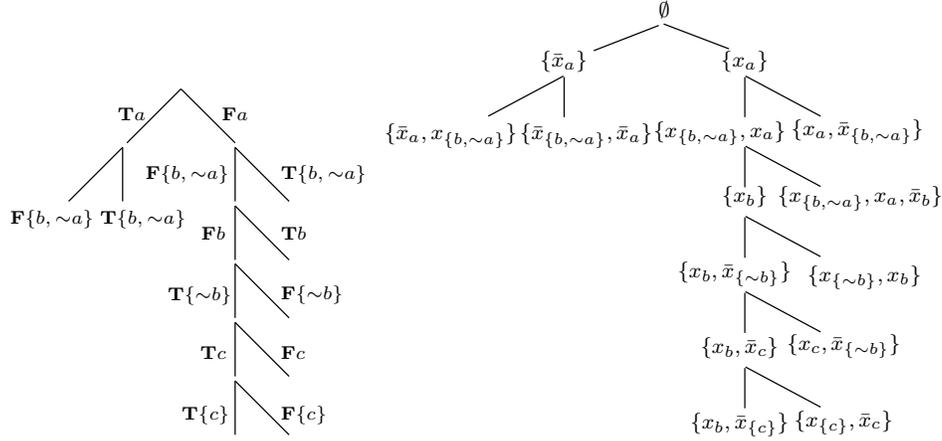

\begin{center}
\begin{tabular}[!hb]{cc}
\begin{minipage}[t]{0.43\textwidth}
\input{asp-tableu-cuts.pstex_t}
\end{minipage}
&
\begin{minipage}[t]{0.5\textwidth}
\input{res-proof.pstex_t}
\end{minipage}
\end{tabular}
\end{center}
\caption{Left: cut tree based on  the $\ASPT$ proof in
  Figure~\ref{tab-ex}. Right:  resulting $\TRES$
  proof\label{tres-ex}.}  
\end{figure}

\begin{proof}[Proof of Theorem \ref{tres-asp} (ii)]
Let $\pi = (C_1,\ldots, C_n=\emptyset)$ be a $\TRES$ refutation of a
set $\clauset$ of clauses.
Recall that each derived clause $C_i$ in $\pi$ is obtained by
resolving on $x$ from $C_{j}=C \cup \{x\}$ and $C_{k}= D \cup \{\bar
x\}$ for some $j,k<i$. 

An $\ASPT$ proof $T$ for $\nlp(\clauset)$ is obtained from $\pi$ as
follows. 
We start from $C_n$, which is obtained from clauses $C_j=\{x\}$ and
$C_k=\{\bar x\}$ by resolving on $x\in\vars(\clauset)$, and apply in
$T$ the cut rule on $a_x$ corresponding to $x$.
Then we recursively continue the same way with $C_j$ ($C_k$,
respectively) in the generated branch with $\FALSE a_x$ ($\TRUE a_x$,
respectively).  
Since $\pi$ is tree-like, each clause in the prefix $(C_1, \ldots,
C_{\max\{j,k\}})$ of $\pi$ is either used in the derivation of $C_j$
or $C_k$, but not in both.
By construction when reaching $C_1$ the branches of $T$ correspond
one-to-one to the paths in $\pi$ (seen as a tree) from $C_n$ to the
leaf clauses of $\pi$.  
For a particular leaf clause $C$, we have for each literal $l \in C$
($l=x$ or $l=\bar x$) contradicting entries for $a_x$ in the
corresponding branch of $T$, that is,
$\FALSE a_x$ if $l=x$ and
$\TRUE a_x$ if $l=\bar x$. 
Now we can directly deduce for each $\FALSE a_x$ 
the entry $\FALSE \{a_x\}$ and for each $\TRUE a_x$ the entry $\FALSE
\{\naf a_x\}$.  
These entries together will allow us to directly deduce $\FALSE c$
(all the bodies of rules with atom $c$ as the head are false).
Since we have $\bot \IF \naf c \in \nlp(\clauset)$, we can deduce
$\TRUE c$, and the branch becomes contradictory.
\end{proof}
The following example illustrates the strategy used in the proof of 
 Theorem~\ref{tres-asp}~(ii).

\begin{example}
\label{ex:tres-to-aspt}
Recall the set  $\clauset=\{ \{x, y\},\{
x,\bar y\}, \{\bar x, y\}, \{\bar x, \bar y\} \}$  of clauses
and the corresponding normal logic program $\nlp(\clauset)$  
presented in Example \ref{ex:clauses2nlp}.
The set $\clauset$ of clauses  has a $\TRES$
refutation $\pi=(\{x, y\},\{x,\bar y\}, \{\bar x, 
y\}, \{\bar x, \bar y\}, \{y\}, \{\bar y\}, \emptyset).$
Now we construct an $\ASPT$ proof $T$ for $\nlp(\clauset)$
from $\pi$ as done in the proof of Theorem~\ref{tres-asp}~(ii). 
The resulting $\ASPT$ proof $T$ is presented in
Figure~\ref{ex:from-res2aspt}.    
In the tableau we have omitted entries of the form $\TRUE\{l\}$ and
$\FALSE\{l\}$ for bodies consisting of a single default literal.
The empty clause is obtained resolving on $y$ from $\{y\}$
and $\{\bar y\}$, and thus we start with applying the cut rule on
$a_y$. 
The clause $\{\bar y\}$ is obtained resolving on $x$ from $\{x, \bar
y\}$ and $\{\bar x, \bar y\}$. 
We continue in the branch with $\TRUE a_y$ by applying the cut rule on~$a_x$. 
Consider now the branch with~$\TRUE a_y$ and~$\TRUE a_x$ in the tableau. 
The branch corresponds to the clause $\{\bar x, \bar y\}$ in
$\clauset$. Thus we arrive in a contradiction by deducing 
$\FALSE c_4$ from $c_4\IF \naf a_x$ and $c_4\IF \naf a_y$, and 
$\TRUE c_4$ from $\bot \IF \naf c_4$. 
Other branches become contradictory similarly.
\end{example}
\begin{figure}[!h]
\input{aspt-proof.pstex_t}
\caption{
An $\ASPT$ proof for $\nlp(\clauset)$ resulting from a $\TRES$ proof
$\pi=(\{x, y\},\{x,\bar y\}, \{\bar x, y\}, \{\bar x, \bar y\}, \{y\},
\{\bar y\}, \emptyset)$ for $\clauset$ in Example
\ref{ex:tres-to-aspt}\label{ex:from-res2aspt}.} 
\end{figure}


\section{Extended ASP Tableaux}
\label{easp}
We will now introduce an \emph{extension rule}\footnote{Notice that
  the extension rule introduced here differs from the one proposed
  in~\cite{HaiJY03} in the context of theorem proving.} to $\ASPT$,  
which results in \emph{Extended ASP Tableaux} ($\EASPT$), an extended
tableau proof system for ASP.
The idea is that one can define names for conjunctions of default
literals.
\begin{definition}

Given a normal logic program $\prog$ and two literals
$l_1,l_2 \in \dlits(\prog)$,
the (elementary) \emph{extension rule} in $\EASPT$
adds the rule $p \IF l_1,l_2$ to
$\prog$, where $p \not\in \atoms(\prog) \cup \{\bot\}$.
\end{definition}
It is essential that $p$ is a new atom for preserving satisfiability.
After an application of the extension rule one considers the program
$\prog'=\prog\union \{p \IF l_1,l_2\}$ instead of the original 
program $\prog$. 
Notice that $\atoms(\prog')=\atoms(\prog)\union\{p\}$. 
Thus when the extension rule is applied several times, the atoms
introduced in previous applications of the rule can be used in
defining further new atoms (see the forthcoming 
Example \ref{ex:extension-rule}). 

When convenient,  we will apply a generalization of the elementary
extension rule. 
By allowing one to introduce multiple bodies for $p$, the \emph{general
extension rule} adds a set of rules
$$\bigcup_{i} \{p \IF 
l_{i,1},\ldots,l_{i,k_i} \mid   p \not\in
\atoms(\prog)\union\{\bot\} \mbox{ and } l_{i,k} \in \dlits(\prog)
\mbox{ for all } 1\leq k \leq k_i\}$$
into $\prog$.
Notice that equivalent constructs can be introduced with the
elementary extension rule. 
For example, bodies with more than two literals can be decomposed with
balanced  parentheses using additional new atoms. 

\begin{example}
\label{ex:extension-rule}
Consider a normal logic program $\prog$ such that
$\atoms(\prog)=\{a,b\}$. We apply the general extension rule and add a
definition for the disjunction of atoms~$a$ and $b$, resulting in a
program $\prog\union\{c\IF a.\; c\IF b\}.$
An equivalent construct can be introduced by applying the elementary
extension rule twice: add first a rule $d\IF \naf a, \naf b$, and
then add a rule $c\IF \naf d, \naf d$. 
\end{example}

An $\EASPT$ proof for (the unsatisfiability of) a
program $\prog$ is an $\ASPT$ proof $T$ for $\prog \cup E$, where~$E$
is a set of \emph{extending (program)  rules}  generated with the
extension rule in $\EASPT$. 
The length of an $\EASPT$ proof is the length of $T$ plus the
number of program rules in~$E$.

A key point is that applications of the extension rule do 
not affect the existence of stable models.
\begin{theorem}
Extended ASP Tableaux is a sound and complete proof system for normal
logic programs.
\end{theorem}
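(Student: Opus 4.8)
The plan is to reduce the whole statement to a single satisfiability-preserving property of the extension rule, and then invoke the soundness and completeness of plain $\ASPT$ established in~\cite{DBLP:conf/iclp/GebserS06}. By definition an $\EASPT$ proof for $\prog$ is an $\ASPT$ proof for $\prog \cup E$, where $E$ is a set of extending rules. Hence completeness is immediate: taking $E = \emptyset$, any unsatisfiable $\prog$ already admits an $\ASPT$ proof, which is an $\EASPT$ proof. Soundness, by contrast, requires ruling out that adding $E$ could turn a satisfiable program into an unsatisfiable one: if $\prog \cup E$ is unsatisfiable (witnessed by the $\ASPT$ proof) and the extension rule preserves satisfiability, then $\prog$ is unsatisfiable as well. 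So the entire argument rests on the following claim, which I would isolate as a lemma: \emph{for a program $\prog$ and an extending rule $p \IF l_1, l_2$ with $p \notin \atoms(\prog) \cup \{\bot\}$ and $l_1, l_2 \in \dlits(\prog)$, the programs $\prog$ and $\prog' = \prog \cup \{p \IF l_1, l_2\}$ have the same stable models up to the truth value of $p$; in particular, $\prog'$ is satisfiable iff $\prog$ is.}

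To prove the lemma I would apply the splitting set theorem (Theorem~\ref{splitting}) with the splitting set $U = \atoms(\prog)$. First I would check that $U$ really splits $\prog'$: every rule of $\prog$ has its head (if not $\bot$) and all its body atoms in $\atoms(\prog) = U$, while the new rule has head $p \notin U$, so its splitting condition holds vacuously. Consequently $\bottom{\prog'}{U} = \prog$ and $\topp{\prog'}{U} = \{p \IF l_1, l_2\}$, since $p$ is the only atom of $\prog'$ outside $U$. By Theorem~\ref{splitting}, the stable models $M$ of $\prog'$ are exactly those for which $X = M \isect U$ is a stable model of $\prog$ and $Y = M \setminus U \subseteq \{p\}$ is a stable model of $\rest{\prog'}{U}{X}$.

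The remaining step is to evaluate the top. Because $l_1, l_2 \in \dlits(\prog)$ range over atoms of $U$, we have $\body(r)^+, \body(r)^- \subseteq U$ for the new rule $r$, so the eval condition $\body(r)^+ \isect U \subseteq X$ and $(\body(r)^- \isect U) \isect X = \emptyset$ holds precisely when $X$ satisfies both $l_1$ and $l_2$, and then the contributed rule is $p \IF$ (the fact $p$, as both set differences in the definition of eval are empty). Thus $\rest{\prog'}{U}{X}$ is the single fact $p$ when $X$ satisfies the body and the empty program otherwise, with unique stable model $\{p\}$ or $\emptyset$ respectively. Hence each stable model $X$ of $\prog$ extends to exactly one stable model of $\prog'$ --- namely $X \union \{p\}$ or $X$ --- and conversely every stable model of $\prog'$ restricts to one of $\prog$. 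This bijection yields the lemma. To cover the general extension rule (multiple bodies) and, more importantly, iterated applications, I would order the rules of $E$ so that each newly introduced atom is fresh relative to the program built so far and induct on $|E|$, re-running the same splitting argument at each step; the multiple-body case is handled identically, the partial evaluation producing the fact $p$ as soon as one body is satisfied.

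The main obstacle is concentrated entirely in setting up the splitting argument correctly, not in any heavy computation. The two subtle points to get right are (i)~that $U = \atoms(\prog)$ genuinely splits $\prog'$, which hinges on $p$ being a new atom distinct from $\bot$ --- exactly the side condition the definition imposes --- and (ii)~that the top evaluates to \emph{at most} a single fact, so that the ``top'' component of every solution is forced and the correspondence with stable models of $\prog$ is a bijection rather than merely a satisfiability equivalence. Once these are in place, both soundness and completeness of $\EASPT$ follow mechanically from the corresponding properties of $\ASPT$.
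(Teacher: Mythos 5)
Your proposal is correct and follows essentially the same route as the paper: the paper likewise reduces the theorem to satisfiability preservation of the extension rule and proves it via the splitting set theorem with splitting set $U=\atoms(\prog)$, noting that $\bottom{\prog\cup E}{U}=\prog$, $\topp{\prog\cup E}{U}=E$, and that $\mathsf{eval}(E,M)$ has a unique stable model for each $M\subseteq\atoms(\prog)$. The only difference is one of bookkeeping: the paper applies the splitting set theorem once to the whole set $E$, asserting the uniqueness of the stable model of the evaluated top directly, whereas you add the extending rules one application at a time and induct on $|E|$, which makes that uniqueness claim trivial (a single fact $p$ or the empty program) at each step.
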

\begin{proof}
Let $T$ be an $\EASPT$ proof for normal logic program $\prog$ with
the set~$E$ of extending rules, that is, an $\ASPT$ proof for $\prog \cup E$.
Since $\ASPT$ is sound and complete, there is a complete
non-contradictory branch in $T$ if and only if $\prog \cup E$
is satisfiable.
The set $\atoms(\prog)$ is a splitting set for $\prog\cup E$, since
$\head(r)\not\in\atoms(\prog)\union\{\bot\}$ for 
every extending rule $r \in E$.
Furthermore, \mbox{$\bottom{\prog\cup E}{\atoms(\prog)}=\prog$} and
\mbox{$\topp{\prog\cup E}{\atoms(\prog)}=E$}.  
By Theorem \ref{splitting}, $\prog \cup E$ is satisfiable if and only
if there is a solution to $\prog\union E$ with respect to $\atoms(\prog)$,
that is, there is a stable model $M\subseteq\atoms(\prog)$ for $\prog$
and a stable model $N$ for $\mathsf{eval}(E,M)$.
Since the rules in $E$ are generated using the extension rule (recall
also $\bot\not\in\head(E)$), there is a unique stable model for 
$\mathsf{eval}(E,M)$ for each $M\subseteq\atoms(\prog)$. 
Thus there is a solution to~$\prog\union E$ with respect to
$\atoms(\prog)$ if and only if $\prog$ is satisfiable, and moreover, 
$\prog\cup E$ is satisfiable if and only if $\prog$ is
satisfiable, and $\EASPT$ is sound and complete. 
\end{proof}

\subsection{The Extension Rule and Well-Founded Deduction}

An interesting question regarding the possible gains of applying
the extension rule in $\EASPT$ with the ASP tableau rules is whether
the additional extension rule allows one to simulate
well-founded deduction (rules (h$\dag$),(h$\ddag$),(i$\dag$), and 
(i$\ddag$)) with the other deduction rules
((b)--(g),(h$\S$),(i$\S$))\footnote{Notice that the proof system
consisting of tableau rules (a)--(g),(h$\S$), and (i$\S$) 
amounts to computing supported models~\cite{DBLP:conf/iclp/GebserS06}.}.
We now show that this is not the case; the extension rule
does not allow us to simulate reasoning related to unfounded sets and
loops.
This is implied by Theorem \ref{must-have-loop-rules}, which states
that, by removing rules (h$\dag$),(h$\ddag$),(i$\dag$), and (i$\ddag$)
from $\EASPT$, the resulting tableau method becomes incomplete for NLPs.

\begin{theorem}
\label{must-have-loop-rules}
Using only tableau rules (a)--(g), (h$\S$) and (i$\S$), and the
extension rule
does not result in a complete proof system for normal logic programs.
\end{theorem}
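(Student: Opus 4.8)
I need to show that the tableau system using only rules (a)--(g), (h$\S$), and (i$\S$)---that is, the rules that compute supported models augmented with the cut and extension rules---is incomplete. Incompleteness means there must exist an unsatisfiable normal logic program $\prog$ for which no contradictory tableau exists in this restricted system, regardless of how the extension rule is applied. The essential point (stressed in the text preceding the statement) is that the rules (h$\dag$),(h$\ddag$),(i$\dag$),(i$\ddag$) are precisely the ones that detect unfoundedness and loops, and that the remaining rules only reason about supported models. So the strategy is to exhibit a program whose unsatisfiability hinges entirely on loop/unfounded-set reasoning, and then argue that adding extending rules cannot recover this reasoning.

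Let me think about the right witness program. I want a program that has no stable model but does have a supported model—or more precisely, one whose only "models" under supported-style reasoning involve atoms justified solely through a positive loop. The canonical example is a program forcing an atom to be true only via self-support. Consider something like $\prog = \{a \IF a.\; \bot \IF \naf a\}$. Here $a \IF a$ creates a loop $\{a\}$, and $\bot \IF \naf a$ forces $a$ to be true; but $a$ can only be derived from itself, so $\{a\}$ is unfounded and $\prog$ has no stable model. However, the assignment making $a$ and the body $\{a\}$ true is a supported model of $\prog$, so it is not refutable by supported-model reasoning alone.

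The main work—and the main obstacle—is arguing that the extension rule cannot help, i.e. that for \emph{any} set $E$ of extending rules, the program $\prog \cup E$ still admits a complete non-contradictory branch under rules (a)--(g),(h$\S$),(i$\S$). First I would note, as in the proof of Theorem~\ref{must-have-loop-rules}'s soundness companion, that $\atoms(\prog)$ is a splitting set for $\prog \cup E$ with $\bottom{\prog\cup E}{\atoms(\prog)}=\prog$ and $\topp{\prog\cup E}{\atoms(\prog)}=E$, and that the extending atoms are uniquely determined once the atoms of $\prog$ are fixed. Then I would construct the offending branch explicitly: start from the assignment that sets $\TRUE a$, $\TRUE\{a\}$, $\FALSE\bot$ (and $\FALSE\{\naf a\}$), which is supported and therefore closed under the non-loop deduction rules restricted to $\prog$; extend it consistently to the extending atoms and bodies of $E$ using only forward rules (b),(d),(f) and (h$\S$), which is always possible since each $p\in\head(E)$ is defined and its truth value is forced monotonically by the literals already assigned. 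The key claim is that this branch is closed under rules (a)--(g),(h$\S$),(i$\S$) yet non-contradictory, because none of these rules can detect that $a$ lacks external support—that detection requires exactly the loop/well-founded rules (h$\ddag$) or (h$\dag$) that we have removed.

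The delicate step is verifying closure and non-contradiction on the extended part: I must check that no application of (h$\S$) or (i$\S$) to the new atoms and bodies of $E$ forces a conflicting entry, and that the backward rules (c),(e),(g) likewise generate nothing contradictory. Since the extending rules form a stratified, acyclic "definitional" layer on top of $\prog$ (each $p$ depends only on literals over $\atoms(\prog)\cup\{$earlier extending atoms$\}$, never circularly on $\bot$), their unique supported extension over the chosen base assignment is consistent, so the branch remains open. I would finish by concluding that the tableau from $\prog\cup E$ is non-contradictory for every $E$, whence the restricted system cannot refute the unsatisfiable program $\prog$, establishing incompleteness.
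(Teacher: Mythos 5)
Your proposal is correct and takes essentially the same route as the paper's proof: the paper uses the witness $\prog=\{\bot\IF\naf a\rsep a\IF b\rsep b\IF a\}$ (a two-atom loop rather than your self-loop $a\IF a$, an immaterial difference), exhibits the same complete non-contradictory supported-model branch, and formalizes your ``stratified definitional layer'' claim as an explicit induction over level sets $H_i$ of the extension atoms, deducing body entries by (b)/(f) and head entries by (d)/(h$\S$) exactly as you sketch. Both arguments then conclude incompleteness in the same way, so your proof matches the paper's in all essentials.
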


\begin{proof}
Consider the NLP $\prog=\{\bot\IF\naf a\rsep a\IF b\rsep b\IF a\}$.
Although $\prog$ is  unsatisfiable, in the proof system having only
the tableau rules (a)--(g),(h$\S$), and (i$\S$), we can construct a
complete and \emph{non-contradictory} tableau with a single branch
$$T=(\prog\cup\{\FALSE \bot\}, \FALSE\{\naf a\}\mbox{ (e)}, \TRUE a\mbox{
  (c)}, \TRUE\{b\}\mbox{ (i}\S\mbox{)}, \TRUE b \mbox{ (g)},
\TRUE\{a\}\mbox{ (i}\S\mbox{)})$$ for $\prog$.

Consider an arbitrary set~$E$ of extending rules generated
using the extension rule in $\EASPT$. 
Recall that $\head(E)\isect(\atoms(\prog)\union \{\bot\})=\emptyset$.
We can form a complete non-contradictory tableau $T'$ for
$\prog\union E$ as follows. 

First, define $H_0=\atoms(\prog)\union\{\bot\}$ and 
$$H_i=\{h\in\head(E)\mid 
\bigcup_{r\in\rules(h)}(\body(r)^+\union\body(r)^-)\subseteq
\bigcup_{j<i}H_j\}.$$
Thus the sets $H_i$ are used to define a level numbering for the atoms
defined in the extension $E$.
Furthermore, we define
$$E_i=\{r\in \prog\union E\mid \head(r)\in \bigcup_{j\leq i} H_j\}$$ 
for all $i\geq 0$.
Notice that $E_0=\prog$, and $\prog\union E=\bigcup_{i\geq 0}E_i$.  
We now show using induction that for each $i\geq 0$, the only branch
$B$ in $T$ can be extended into a complete non-contradictory branch 
for $E_i$ using tableau rules (b)--(g), (h$\S$), and~(i$\S$). 

The base case ($i=0$) holds by definition.
Assume that the claim holds for~\mbox{$i-1$}, that is, $B$ can be
extended into a complete non-contradictory branch $B'$ 
for $E_{i-1}$.
Consider now arbitrary $r\in E_i$. 
By definition $\body(r)^+\union\body(r)^-\subseteq
\atoms(E_{i-1})$ for each $r\in E_i$.  
Since $B'$ is complete, it contains entries for each 
$a\in\atoms(E_{i-1})$, and we can deduce an entry for $\body(r)$ 
using ASP tableau rule (b) or (f) (depending on the entries in $B'$). 
If the entry $\TRUE(\body(r))$ has been deduced, we can deduce~$\TRUE
h$ for $h=\head(r)$ using (d).
Otherwise, we have deduced the entries $\FALSE(\body(r'))$ for every
$r'\in E_i$ such that $h=\head(r')$, and we can deduce $\FALSE h$
using (h$\S$).   
Thus we have deduced entries for all $a\in\atoms(E_i)\union\body(E_i)$
and the branch is non-contradictory. 
Furthermore it is easy to check that the branch is closed under the
tableau rules (b)--(g),(h$\S$), and (i$\S$).

Thus we obtain a complete and non-contradictory tableau for $\prog\union
E$. 
Since we cannot generate a contradictory tableau for $\prog$
with tableau rules (a)--(g),(h$\S$), and (i$\S$), we cannot generate one
for $\prog\union E$ either.
This is in contradiction with the fact that $\prog$ is unsatisfiable.
\end{proof}


\section{Proof Complexity}
\label{comp}
In this section we study proof complexity theoretic issues related to
$\EASPT$ from several viewpoints: we will
\begin{itemize}
\item
consider the relationship between $\EASPT$ and the Extended
Resolution proof system~\cite{Tseitin:complexity}, 
\item
give an explicit separation of $\EASPT$ from 
$\ASPT$, and 
\item
relate the extension rule to the effect of program simplification on
proof lengths in $\ASPT$. 
\end{itemize}

\subsection{Relationship  with Extended Resolution}
The system $\EASPT$ is motivated by Extended Resolution ($\ERES$), a
proof system originally introduced in~\cite{Tseitin:complexity}.
The system $\ERES$ consists of the resolution rule and an extension
rule that allows one to expand a set of clauses 
by iteratively introducing equivalences of the form $x \equiv
l_1 \wedge l_2$, where $x$ is a new variable, and~$l_1$ and $l_2$ are 
literals in the current set of clauses.
In other words, given a set $\clauset$ of clauses, one application of the
extension rule adds the clauses  $\{x, \bar l_1, \bar l_2\}$, $\{\bar
x , l_1\}$, and $\{\bar x, l_2\}$ to~$\clauset$.
The system $\ERES$ is known to be more powerful than $\RES$; in
fact, $\ERES$ is polynomially equivalent to, for example,
extended Frege systems, and no superpolynomial proof complexity 
lower bounds are known for $\ERES$.
We will now relate $\EASPT$ with $\ERES$, and 
show that they are polynomially equivalent under the translations $\comp$ and
$\nlp$.
\begin{theorem} 
\label{eres-easp}
$\ERES$ and $\EASPT$
are polynomially equivalent proof systems 
in the sense that 
\begin{itemize}
\item[(i)] considering tight normal logic programs, $\ERES$
under the translation $\comp$ polynomially simulates 
$\EASPT$, and 
\item[(ii)]
considering sets of clauses, 
$\EASPT$ under the translation $\nlp$ polynomially simulates
$\ERES$.
\end{itemize}
\end{theorem}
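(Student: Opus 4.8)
The plan is to prove the two polynomial simulations separately, leveraging the already-established equivalence between $\ASPT$ and $\TRES$ (Theorem~\ref{tres-asp}), together with a careful tracking of how the two extension rules correspond to one another under the translations $\comp$ and $\nlp$. The guiding intuition is that a single application of the $\EASPT$ extension rule (adding $p \IF l_1, l_2$) should mirror a single application of the $\ERES$ extension rule (adding the clauses encoding $x \equiv l_1 \wedge l_2$), because $\comp$ translates exactly such a rule into exactly those clauses, modulo the extra body-variable $x_{\{l_1,l_2\}}$.

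For part~(i), I would start from an $\EASPT$ proof of a tight program $\prog$, i.e.\ an $\ASPT$ proof $T$ of $\prog \cup E$ where $E$ is a set of extending rules. The key observation is that $\prog \cup E$ is itself tight: each extending rule has a fresh head $p \notin \atoms(\prog)$, so by the level numbering already used in the proof of Theorem~\ref{must-have-loop-rules} the atoms of $E$ introduce no new loops into $\dep{\prog \cup E}$. Thus Theorem~\ref{tres-asp}(i) applies to $\prog \cup E$, giving a $\TRES$ proof of $\comp(\prog \cup E)$ polynomial in the length of $T$. It then remains to show that $\comp(\prog \cup E)$ can be obtained from $\comp(\prog)$ by a polynomial number of $\ERES$ extension steps, up to the introduction of the body atoms $x_B$. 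Here I would examine the clauses of $\comp$ contributed by each extending rule $p \IF l_1, l_2$: the body variable $x_{\{l_1,l_2\}}$ is defined by exactly the $\ERES$ clauses $x \equiv l_1 \wedge l_2$, and the head equivalence for $p$ reduces to $x_p \equiv x_{\{l_1,l_2\}}$, which is a definitional extension derivable in $\ERES$. Chaining these across all rules in $E$ in level order yields $\comp(\prog \cup E)$ via polynomially many $\ERES$ extension steps from $\comp(\prog)$, and prepending these to the $\TRES$/$\RES$ derivation gives the required $\ERES$ proof.

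For part~(ii), I would dually start from an $\ERES$ refutation of a clause set $\clauset$, consisting of an extension phase that adds clauses for equivalences $x \equiv l_1 \wedge l_2$ followed by a $\RES$ refutation of the extended clause set $\clauset'$. The natural strategy is to translate each $\ERES$ extension step into one application of the $\EASPT$ extension rule, introducing an atom $a_x$ (and its companion $\hat a_x$) named by a rule whose body encodes $l_1 \wedge l_2$ in the $\nlp$ style, so that $\nlp(\clauset')$ is recovered (up to definitional atoms) from $\nlp(\clauset)$ by polynomially many $\EASPT$ extension steps. Once the clause set is fully extended, I invoke Theorem~\ref{tres-asp}(ii) on $\clauset'$ to convert the $\RES$ refutation into an $\ASPT$ proof of $\nlp(\clauset')$ of polynomial size, which together with the extending rules is an $\EASPT$ proof of $\nlp(\clauset)$.

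I expect the main obstacle to lie in the precise bookkeeping of the \emph{auxiliary body atoms}: both translations $\comp$ and $\nlp$ introduce body variables or clause atoms that have no direct counterpart on the other side, so the correspondence between a $\EASPT$ extension step and an $\ERES$ extension step is not literally an equality of clause sets but an equivalence up to these definitional atoms. The careful part is to verify that introducing each such auxiliary atom costs only a constant number of $\ERES$ extension steps (respectively $\EASPT$ extension-rule applications), and that the literals $l_1, l_2$ available in the current program/clause set genuinely match across the translation — in particular, a negative literal $\naf a$ in $\EASPT$ must correspond correctly to the negated SAT literal and to the companion atoms $\hat a_x$ used by $\nlp$. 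Beyond this, the $\ERES$ extension rule demands that $l_1,l_2$ be literals over \emph{existing} variables; I would therefore process the extending rules in the level order established earlier, so that by induction every literal used in a new definition is already present, mirroring exactly the iterative nature of both extension rules. Once this correspondence is nailed down, the two simulations follow by composing it with Theorem~\ref{tres-asp}.
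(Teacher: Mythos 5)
Your part~(i) is essentially the paper's own argument: apply Theorem~\ref{tres-asp}(i) to $\prog \cup E$ (your explicit check that $\prog \cup E$ remains tight because extending rules have fresh heads is a point the paper leaves implicit), and observe that $\comp(\prog \cup E)$ is exactly $\comp(\prog)$ together with clauses $x_h \equiv x_{\{l_1,l_2\}}$ and, where the body is new, $x_{\{l_1,l_2\}} \equiv x_{l_1} \wedge x_{l_2}$, all obtainable by $\ERES$ extension steps. That half is sound and matches the paper.

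Part~(ii), however, has a genuine gap. An $\ERES$ refutation consists of extension steps followed by a \emph{general}, DAG-shaped $\RES$ refutation of the extended clause set $\clauset'$: derived clauses may be reused arbitrarily often. Theorem~\ref{tres-asp}(ii), which you invoke to finish, only states that $\ASPT$ under $\nlp$ polynomially simulates $\TRES$, i.e.\ \emph{tree-like} resolution. To apply it to the $\RES$ refutation of $\clauset'$ you would first have to convert that refutation into a tree-like one, and this conversion can incur an exponential blow-up ($\RES$ is strictly more powerful than $\TRES$, as the paper itself recalls). So the polynomial bound is lost precisely at the step where you hand the refutation to Theorem~\ref{tres-asp}(ii); simulating only the extension phase by $\EASPT$ extension rules is not enough.

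The paper's proof avoids this with a further use of the extension rule that your plan is missing: besides the rules $a_x \IF a_{l_1}, a_{l_2}$ mirroring the $\ERES$ extensions, it introduces an atom $c_i$ with rules $c_i \IF a_l$ for each literal $l$ of every derived clause $C_i$ of the refutation $\pi$, and chain atoms defined by $p_1 \IF c_1$ and $p_i \IF c_i, p_{i-1}$, encoding that the first $i$ clauses of $\pi$ all hold. One then cuts on $p_1, \ldots, p_{n-1}$ in order; each branch containing $\FALSE p_i$ (and $\TRUE p_j$ for $j<i$) closes by propagation alone---using the rules for $c_j$ and $c_k$ when $C_i$ is the resolvent of $C_j$ and $C_k$---so the whole tableau is linear in the length of $\pi$ no matter how often clauses are reused. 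In effect, the DAG structure of the resolution proof is pushed into the program by extension rather than into the branching structure of the tableau. Without this device (or an equivalent one, such as first converting the $\ERES$ proof into one whose resolution part is tree-like at only polynomial cost), your part~(ii) does not go through.
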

\begin{proof} 
(i): 
Let $T$ be an $\EASPT$ proof for a tight NLP
 $\prog$, that is, $T$ is an $\ASPT$ proof for $\prog\union
E$, where $E$ is the set of extending rules generated in the proof.
We use the shorthand $x_l$ for the variable corresponding to default
literal $l$ in $\comp(\prog\union E)$, that is, $x_{l}=x_a$
($x_{l}=\bar x_a$, respectively) if $l=a$ ($l=\naf a$, respectively) 
for $a\in\atoms(\prog\union E)$.  
By Theorem~\ref{tres-asp} there is a polynomial
$\RES$ proof for $\comp(\prog\union E)$.
Now consider $\comp(\prog)$.
We apply the extension rule in  $\ERES$ in
the same order in which the extension rule in $\EASPT$ is applied when
generating the set $E$ of extending rules.
In other words, we apply the  extension rule in $\ERES$ as follows 
for each rule $r = h\IF l_1, l_2$ in $E$.
If $\body(r) = \{l_1, l_2\}\in\body(\prog)$, then there are the clauses
$x_{\{l_1, l_2\}}\equiv x_{l_1}\wedge x_{l_2}$ in $\comp(\prog)$.
If this is the case, we generate
the clauses $x_h \equiv x_{\{l_1, l_2\}}$ with the extension rule in
$\ERES$.
Otherwise, that is, if $\body(r)$ does not have a corresponding
propositional variable in $\comp(\prog)$, 
we generate the clauses
$x_h \equiv x_{\{l_1, l_2\}}$ and $x_{\{l_1, l_2\}}\equiv x_{l_1}\wedge x_{l_2}$.
Denote the resulting set of extending clauses by $E'$.
Now we notice that 
$\comp(\prog)\union E'=\comp(\prog\union E)$, and therefore
the $\RES$ proof for $\comp(\prog\union E)$ is an
$\ERES$ proof for $\comp(\prog)$ in which the extension rule in $\ERES$
 is applied to generate the clauses in $E'$.

(ii): 
Let $\pi = (C_1,\ldots, C_n=\emptyset)$ be an $\ERES$
proof for a set $\clauset$ of clauses. Let $E$ be the set of clauses
in $\pi$ generated with the extension rule.
We introduce shorthands for atoms corresponding to
literals, that is, $a_{l}=a_x$ ($a_{l}=\naf a_x$)
if $l=x$ ($l=\bar x$) for $x\in\vars(\clauset\union E)$. 
Now, an $\EASPT$ proof for $\nlp(\clauset)$  is generated as follows.
First, we add the following rules to $\nlp(\clauset)$ with the
extension rule in $\EASPT$: 
\begin{eqnarray}
&& \hspace{-5ex}a_x\IF a_{l_1}, a_{l_2}\mbox{ for each extension
}x\equiv l_1\land l_2; \label{exten-rules}\\
&&\hspace{-5ex}c\IF a_{l}\mbox{ for each literal }l\in C\mbox{ for a
  clause }C\in \pi\mbox{ such that }C\not\in\clauset;\mbox{ and}
\label{clause-rules}\\ 
&&\hspace{-5ex} p_{1}\IF c_1\mbox{ and }p_{i}\IF c_i, p_{{i-1}}\mbox{
  for each }C_i\in\pi \mbox{ and }2\le i<n. \label{proof-rules}
\end{eqnarray}
Then, from $i=1$ to $n-1$ apply the cut rule on $p_{i}$ in the branch with 
$\TRUE p_{j}$ for all $j<i$.
We now show that for each $i$ the branch with $\FALSE p_i$ and $\TRUE p_j$
for all $j<i$ becomes contradictory without further application of the
cut rule. 
First, deduce~$\FALSE c_i$ from $\FALSE p_i$ using the rule 
(\ref{proof-rules}) for $i$.
One of the following holds for $C_i\in\pi$: either
(a)~$C_i\in\clauset$, (b) $C_i$ is a derived clause, or  (c) $C_i\in
E$.  
\begin{itemize}
\item[(a)]
If $C_i\in\clauset$ we can deduce~$\TRUE c_i$ from $\bot\IF \naf
c_i\in\nlp(\clauset)$, and the branch becomes contradictory. 
\item[(b)]
If  $C_i$ is a derived clause, that is, $C_i$ is obtained from $C_j$ and
$C_k$ for $j,k<i$ resolving on $x$, then $C_i=(C_k\union
C_j)\setminus\{x,\bar x\}$.  
For all the literals $l\in C_i$ we deduce~$\false a_l$ from the rules
(\ref{clause-rules}) in the extension. 
From $\TRUE p_j$ and $\TRUE p_k$ we deduce~$\TRUE c_j$ and~$\TRUE c_k$ 
using the rule (\ref{proof-rules}) in the extension for $j$ and $k$, 
respectively. 
Furthermore because we have entries $\false a_l$ for each $l$ in  
$(C_k\union C_j)\setminus\{x,\bar x\}$, we deduce
$\TRUE a_x$ and $\FALSE a_x$ and the branch becomes contradictory.
Recall that there is a rule $c\IF a_l$ for each clause $C\in \pi$ and
literal $l\in C$ either in $\nlp(\clauset)$ or in the extension (rules
in (\ref{clause-rules})).  
\item[(c)]
If $C_i\in E$, then $C_i$ is of the form  $\{x, \bar l_1, \bar l_2\}$,
$\{\bar x, l_1\}$, or $\{\bar x, l_2\}$
for $x\equiv l_1\land l_2$. 
For instance, if $C_i=\{\bar x, l_1\}$, then from
$c_i\IF \naf a_x$ and $c_i\IF a_{l_1}$ we deduce~$\TRUE a_x$ 
and~$\false a_{l_1}$. The branch becomes contradictory as $\TRUE \{ a_{l_1},
a_{l_2} \}$ and~$\true a_{l_1}$ are deduced from 
a rule~(\ref{exten-rules})
in the extension. 
The branch becomes contradictory similarly, if~$C_i$ is of the form
$\{x, \bar l_1, \bar l_2\}$ or $\{\bar x, l_2\}$.
\end{itemize}
Finally, consider the branch with $\TRUE p_i$ for all $i=1\ldots n-1$.
The empty clause $C_n$ in~$\pi$ is obtained by resolving $C_{j} = \{x\}$ and
$C_k = \{\bar x\}$ in~$\pi$ for some $j,k<n$.
Thus we can deduce $\TRUE c_{j}$ and $\TRUE c_k$ from 
rules (\ref{proof-rules}) for $j$ and $k$,
respectively, and furthermore,~$\TRUE a_x$ 
and~$\FALSE a_x$ from $c_{j}\IF a_x$
and $c_{k}\IF \naf  a_x$,
resulting in a contradiction in the branch.
The obtained contradictory ASP tableau is of linear length with
respect to $\pi$.
\end{proof}

\subsection{Pigeonhole Principle Separates Extended ASP Tableaux from
  ASP Tableaux} 
To exemplify the strength of $\EASPT$, we now consider  a family of
normal logic programs $\{\prog_n\}$ which separates $\EASPT$ from
$\ASPT$, that is, we give an explicit polynomial-length proof for 
$\prog_n$  for which $\ASPT$ has exponential-length minimal 
proofs with respect to $n$.
We will consider this family also in the experiments reported in this
article.

The program family $\{\PHP\}$ in question is the following typical 
 encoding of the \emph{pigeonhole principle} as a normal logic program:
\begin{eqnarray}     
\PHP &= &
\{\bot \IF \naf p_{i,1}, \ldots, \naf
p_{i,n} \mid 1 \le i \le n+1\}\; \cup 
 \label{php1} \\ 
&&
\{\bot \IF p_{i,k}, p_{j,k}\mid 1 \le i < j \le n+1,\
1 \le k \le n \}\; \cup \label{php2} \\ 
&&
\{p_{i,j} \IF \naf p_{i,j}'\rsep\; p_{i,j}' \IF \naf p_{i,j}\mid
1 \le i \le n+1,\ 1 \le j \le n\}.
\label{php3}
\end{eqnarray}       
In the program above, $p_{i,j}$ has the interpretation that pigeon $i$
sits in hole $j$.
The rules in~(\ref{php1}) require that each pigeon must sit in some hole,
and the rules in~(\ref{php2}) require that no two pigeons can sit in
the same hole. The rules in~(\ref{php3}) enforce that for each pigeon
and each hole, the pigeon either sits in the hole or does not sit in
the hole. 
Each $\PHP$ is unsatisfiable since there is no bijective mapping from
an $(n+1)$-element set to an $n$-element set. 

\begin{theorem}
The complexity of $\{\PHP\}$ with respect to $n$
is 
\begin{itemize}
\item[(i)] polynomial in $\EASPT$,  and
\item[(ii)] exponential in $\ASPT$.
\end{itemize}
\label{separate}
\end{theorem}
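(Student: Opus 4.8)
The plan is to prove the two parts separately, riding on the two equivalence results already established: part~(ii) will use the $\ASPT\equiv\TRES$ correspondence (Theorem~\ref{tres-asp}) together with the classical exponential lower bound for Resolution on the pigeonhole principle, while part~(i) will mimic Cook's polynomial-size Extended Resolution refutation of the pigeonhole principle, which is available to us because $\EASPT\equiv\ERES$ (Theorem~\ref{eres-easp}). A preliminary observation that unlocks part~(ii) is that $\PHP$ is \emph{tight}: the only rules with a non-empty positive body are those in~(\ref{php2}), whose head is $\bot\notin\atoms(\PHP)$, so $\dep{\PHP}$ has no edges between atoms and hence $\LOOP(\PHP)=\emptyset$. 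Thus Theorem~\ref{tres-asp}~(i) applies directly to $\PHP$.

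For part~(ii), I would first invoke Theorem~\ref{tres-asp}~(i) to obtain $C_{\TRES}(\comp(\PHP))\le q(C_{\ASPT}(\PHP))$ for some polynomial~$q$, and then use that tree-like Resolution is a refinement of Resolution, so $C_{\RES}(\comp(\PHP))\le C_{\TRES}(\comp(\PHP))$. It then suffices to show that $C_{\RES}(\comp(\PHP))$ is exponential in~$n$. I would do this by exhibiting a simple restriction/substitution under which $\comp(\PHP)$ collapses to the standard clausal pigeonhole formula $\mathrm{PHP}_n^{n+1}$: the body variables of the $\bot$-headed rules are forced, so I set $x_B=\F$ for every body $B$ of a rule in~(\ref{php1})--(\ref{php2}); this turns the completion clauses of~(\ref{php1}) into the ``every pigeon sits somewhere'' clauses $\{x_{p_{i,1}},\ldots,x_{p_{i,n}}\}$ and those of~(\ref{php2}) into the ``no two pigeons share a hole'' clauses $\{\bar x_{p_{i,k}},\bar x_{p_{j,k}}\}$, while the choice rules~(\ref{php3}) merely assert $x_{p_{i,j}'}\equiv\bar x_{p_{i,j}}$, which I eliminate by the literal substitution $x_{p_{i,j}'}\mapsto\bar x_{p_{i,j}}$ (their clauses becoming tautologies). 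Since Resolution proof length does not grow under restriction or under substitution of a variable by a literal (tautological clauses being simply discarded), any Resolution refutation of $\comp(\PHP)$ projects to one of $\mathrm{PHP}_n^{n+1}$ of no greater length; Haken's exponential lower bound for the latter then forces $C_{\RES}(\comp(\PHP))$, and therefore $C_{\ASPT}(\PHP)$, to be exponential in~$n$.

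For part~(i), I would construct a polynomial-length $\EASPT$ proof by porting Cook's inductive Extended Resolution argument into the tableau setting. Concretely, using the general extension rule I would introduce, for the reduction from the $m$-hole to the $(m-1)$-hole instance, fresh atoms $q_{i,j}$ standing for ``$p_{i,j}$, or $p_{i,m}$ and $p_{m+1,j}$'', i.e.\ defining $q_{i,j}\equiv p_{i,j}\lor(p_{i,m}\land p_{m+1,j})$ via a constant number of elementary extension steps each (one auxiliary atom for the conjunction, then two disjunctive bodies). Iterating this for $m=n,n-1,\ldots,1$ adds only $O(n^3)$ extending rules and yields a hierarchy of definitions; crucially the resulting program stays tight, as each $q$ depends only on atoms of strictly higher level. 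I would then argue that $\ASPT$ refutes the extended program in polynomial length, propagating level by level the derived pigeonhole constraints of the collapsed instance from those of the previous level, bottoming out at the one-hole instance which closes immediately. This is exactly the tableau image of Cook's Resolution derivation, made tree-like by the fact that the extension atoms name and allow reuse of the intermediate clauses (the same phenomenon that makes $\EASPT\equiv\ERES$ consistent with $\ASPT\equiv\TRES$).

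The main obstacle I anticipate lies in the two bridging verifications rather than in the high-level strategy. For part~(ii) the delicate point is checking that the restriction/substitution really returns $\comp(\PHP)$ to \emph{exactly} $\mathrm{PHP}_n^{n+1}$, keeping careful track of the forced body variables and of the ``other direction'' completion clauses that become satisfied, and confirming that replacing $p_{i,j}'$ by $\bar x_{p_{i,j}}$ is a legitimate operation for a Resolution lower bound. For part~(i) the real work is to spell out the tableau realizing the inductive reduction and to verify both its correctness and that its length---counting extending rules and entries---stays polynomial; ensuring that tightness is preserved throughout the extension (so that Theorem~\ref{tres-asp}~(i) and the intended propagation order remain valid) is the point that most needs care.
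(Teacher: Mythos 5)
Your part~(ii) is sound and follows the paper's route (Theorem~\ref{tres-asp}~(i) plus Haken's lower bound); in fact your restriction/substitution argument collapsing $\comp(\PHP)$ onto the clausal pigeonhole formula is a more careful discharge of a step the paper treats loosely (the paper simply asserts that $\comp(\PHP)$ is the clausal encoding plus harmless tautologies, ignoring the body variables, which your projection $x_B\mapsto\F$, $x_{p_{i,j}'}\mapsto\bar x_{p_{i,j}}$ handles properly). The genuine gap is in part~(i). You add only Cook's variable extension (your $q_{i,j}$, the paper's $\EXT$ atoms $e^l_{i,j}$) and claim the tableau can replay Cook's derivation ``level by level,'' being ``made tree-like by the fact that the extension atoms name and allow reuse of the intermediate clauses.'' That justification misidentifies what the extension atoms name: they name the \emph{variables} of the reduced instances, not the \emph{derived clauses} of the derivation. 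The level-$l$ pigeon and hole constraints over the $e^l_{i,j}$ are derived, unnamed objects, and $\ASPT$ is polynomially equivalent to \emph{tree-like} resolution (Theorem~\ref{tres-asp}), so every reuse of a derived constraint forces a re-derivation. Since deriving each level-$l$ pigeon constraint reuses the level-$(l+1)$ constraint of pigeon $l+2$ (and derived hole constraints), the re-derivations multiply across the $n$ levels and the natural realization of your plan is an exponential tree. Note that what you are implicitly claiming is a polynomial $\ASPT$ proof of $\CPHP=\PHP\cup\bigcup_{1<l\le n}\EXT$, a statement the paper conspicuously avoids: it only speculates that \emph{conflict-learning} solvers (whose power corresponds to dag-like $\RES$) might handle $\CPHP$.

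The paper closes exactly this gap by using the extension rule a second time, beyond Cook's definitions: it adds the set $\E'(\pi)$ containing an atom $c_i$ naming each clause $C_i$ of Cook's $\ERES$ proof $\pi$ (rules $c_i\IF a_l$ for $l\in C_i$, as in~(\ref{clause-rules})) and atoms $p_i$ naming the \emph{prefixes} of $\pi$ (rules $p_1\IF c_1$ and $p_i\IF c_i,p_{i-1}$, as in~(\ref{proof-rules})). The tableau then cuts on $p_1,\ldots,p_{n-1}$ in sequence; in the branch with $\TRUE p_j$ for $j<i$ and $\FALSE p_i$, propagation recovers $\TRUE c_j$, $\TRUE c_k$ for the premises of the resolution step producing $C_i$ and $\FALSE c_i$, and closes the branch with work linear in $|C_i|+|C_j|+|C_k|$; constraints of $\PHP$ itself are used in place of the rules $\bot\IF\naf c$ when $C_i\in\clauset_{\mathrm{PHP}}$. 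This yields a proof linear in the length of $\pi$. Without clause/prefix naming (or an equivalent device for reusing derived information in a tree-like calculus), your construction does not give a polynomial bound, and you would instead need a polynomial upper bound for tree-like resolution on the pigeonhole clauses plus Cook's extension clauses alone---which is not established anywhere and is very likely false. A minor additional point: your concern about preserving tightness so that Theorem~\ref{tres-asp}~(i) applies is misplaced for part~(i), since that simulation converts $\ASPT$ proofs into $\TRES$ proofs, which is the wrong direction for proving an upper bound.
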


\begin{proof}
(i): 
In~\cite{Cook:short} an extending set of clauses is added
to a clausal encoding $\clauset_{\mathrm{PHP}}$ of the pigeonhole 
principle\footnote{The particular encoding, for which there are no
  polynomial-length $\RES$ proofs~\cite{Haken:resolution}, is 
$\clauset_{\mathrm{PHP}}=\bigcup_{1 \le i \le n+1} 
\{\{\bigvee_{j=1}^n p_{i,j}\}\} \cup 
\bigcup_{1 \le i < j \le n+1,1 \le k \le n}
\{\{\neg p_{i,k} \vee \neg p_{j,k}\}\}$.}
so that $\RES$ has polynomial-length proofs for the resulting set of
clauses. 
By Theorem~\ref{eres-easp} (ii) there is a polynomial-length $\EASPT$
proof for  
\begin{eqnarray*}
\nlp(\clauset_{\mathrm{PHP}}) &=&
\{p_{i,j} \IF \naf p_{i,j}'\rsep\; p_{i,j}' \IF \naf p_{i,j}\mid
1 \le i \le n+1,\ 1 \le j \le n\}\cup \\
&&
\{\bot \IF \naf c_i
 \mid 1\leq i\leq n+1\}\cup  \\
&& \{\bot \IF \naf c_{ijk} \mid 1\leq i<j\leq n+1,1\leq k\leq n\}\cup \\
&& 
\{c_i \IF p_{i,j} \mid 1\leq j\leq n, 1\leq i\leq n+1\} \cup \\
&& 
\{c_{ijk} \IF \naf p_{i,k}\rsep\;
c_{ijk} \IF \naf p_{j,k} \mid 1\leq i<j\leq n+1,1\leq k\leq n\}.
\end{eqnarray*}
For simplicity, we keep the names of the atoms $p_{i,j}$ unchanged in
the translation.

In more detail, let
$\pi = (C_1,C_2,\ldots, C_m =\emptyset)$ be the polynomial-length
$\ERES$ proof\footnote{
The polynomial-length $\ERES$ proof for $\clauset_{\mathrm{PHP}}$
is not described in detail in~\cite{Cook:short}.
Details on the structure of the $\RES$ proof can be found
in~\cite{JarvisaloJ:Constraints08}.
The intuitive idea is that the extension allows for reducing $\PHP$ to
$\mathrm{PHP}_{n-1}^n$ with a polynomial number of resolution steps.}
for the clausal representation $\clauset_{\mathrm{PHP}}$.
Let
\begin{eqnarray*}
\EXT &= &
\{e_{i,j}^l \IF e_{i,j}^{l+1}\!\rsep\;e_{i,j}^l \IF e_{i,l}^{l+1},
e_{l+1,j}^{l+1}
\mid 1 \le i \le l\mbox{ and }1 \le j \le l-1\}
\end{eqnarray*}
for $1< l \le n$, where each $e_{i,j}^{n+1}$ is $p_{i,j}$. The
extension $\EXT$ corresponds the set of extending clauses
in~\cite{Cook:short} similarly to the set of rules (\ref{exten-rules})
in part~(ii) of the proof of Theorem~\ref{eres-easp}.
Furthermore, $\E(\pi)$ consists of the sets of rules
(\ref{clause-rules}) and (\ref{proof-rules}) defined in 
the proof of Theorem~\ref{eres-easp} (ii).
By applying the strategy from the proof of
Theorem~\ref{eres-easp}~(ii), 
we obtain a polynomial-length~$\ASPT$ proof for  
\begin{equation*}
\nlp(\clauset_{\mathrm{PHP}})\cup \bigcup_{1 < l\le n}\EXT\union\E(\pi).
\end{equation*}

Now, we use the same strategy to construct a polynomial $\ASPT$ proof
for the program
$$\EPHP = \PHP \cup \bigcup_{1 < l\le n} \EXT \cup \E'(\pi),$$
where $\E'(\pi)$ consists of rules $c\IF a_{l}$ for each literal $l\in
C$ for each clause $C\in \pi$ (that is, rules as in (\ref{clause-rules})
but without the restriction $C\not\in\clauset_{\mathrm{PHP}}$)
together with the rules in~(\ref{proof-rules}). 
The only difference comes in step~(a) in the proof of
Theorem~\ref{eres-easp}~(ii), that is, when we have deduced $\FALSE c$ 
corresponding to $C\in\clauset_{\mathrm{PHP}}$. Since we do not have
the rule $\bot\IF \naf c$ in $\EPHP$, we cannot deduce $\TRUE c$
to obtain a contradiction.
Instead, we can deduce a contradiction without using the $\ASPT$ cut rule 
through a program rule in
$\PHP$ that corresponds to the clause $C$. 
For instance, if $C=\{\neg p_{i,k},\neg p_{j,k}\}$, we have
the rules $c\IF \naf p_{i,k}$ and $c\IF \naf p_{j,k}$ 
in $\E'(\pi)$ and the rule $\bot \IF p_{i,k}, p_{j,k}$ in $\PHP$. 
From~$\FALSE c$, we deduce~$\TRUE p_{i,k}$ and~$\TRUE p_{j,k}$. 
From~$\FALSE \bot$ and \mbox{$\bot \IF p_{i,k},p_{j,k}$}, we 
deduce~$\FALSE\{p_{i,k},p_{j,k}\}$, and furthermore, from $\TRUE p_{i,k}$
and $\FALSE\{p_{i,k},p_{j,k}\}$, we deduce~$\FALSE p_{j,k}$.
This results in a polynomial-length $\EASPT$ proof for $\PHP$.

(ii): 
Assume now that there is a polynomial $\ASPT$ proof for $\PHP$.
By Theorem~\ref{tres-asp}, there is a polynomial $\TRES$ proof for
$\comp(\PHP)$.  
Notice that the completion $\comp(\PHP)$ consists of the clausal
encoding $\clauset_{\mathrm{PHP}}$ of the pigeonhole principle 
and additional clauses (tautologies) for rules of the form
$p_{i,j}\IF \naf p_{i,j}'$, $p_{i,j}'\IF \naf p_{i,j}$. 
It is easy to see that these additional tautologies 
do not affect the length of the minimal $\TRES$
proofs for $\comp(\PHP)$.
Thus there is a polynomial-length $\TRES$ proof for
the clausal pigeonhole encoding. 
However, this contradicts
the fact that the complexity of the clausal  pigeonhole principle is
exponential with respect to
 $n$ for (Tree-like) Resolution~\cite{Haken:resolution}.
\end{proof}

We can also easily obtain a \emph{non-tight}
program family to witness the separation demonstrated in 
Theorem~\ref{separate}. Consider the family 
$$\{\PHP \cup \{p_{i,j} \IF p_{i,j} \mid 1\le i \le n+1,1\le j \le n \}\},$$
which is non-tight with the additional self-loops $\{p_{i,j} \IF
p_{i,j}\}$, but preserves (un)satis\-fia\-bility of ${\rm PHP}^{n+1}_n$ for
all $n$.
Since the self-loops do not contribute to the proofs for $\PHP$,
$\ASPT$
 still has exponential-length minimal proofs for these
programs, while the polynomial-length $\EASPT$ proof presented in the proof of 
Theorem~\ref{separate} is still valid.

The generality of the arguments used in the proof of Theorem~\ref{separate} 
is not limited to the specific family $\PHP$ of NLPs.
For understanding the general idea behind the explicit 
construction of $\EPHP$, it is informative to notice the following.
Instead of considering $\PHP$, one can apply the argument 
in the proof Theorem~\ref{separate} using any tight NLP $\prog$ which
represents a set of clauses $\clauset$ for which
(i)~there is no polynomial-length $\RES$ proof, but for which
(ii)~there is a polynomial-length $\ERES$ proof .
By property (ii)~we know from Theorem~\ref{eres-easp}~(ii) that there
is a polynomial-length $\EASPT$ proof for $\prog$.

\subsection{Program Simplification and Complexity}
We will now give an interesting corollary of
Theorem~\ref{separate}, addressing the effect of program
simplification on the length of proofs in $\ASPT$.

Tightly related to the development of efficient solver
implementations for ASP programs arising from
practical applications is the development of techniques for
\emph{simplifying} programs.  
Practically relevant programs are often generated automatically, and
in the process a large number of redundant constraints is produced.
Therefore efficient program simplification through \emph{local
  transformation rules} is important.
While various satisfiability-preserving local transformation rules for 
simplifying logic programs have been introduced (see 
\cite{DBLP:conf/lpnmr/EiterFTW04} for example), the effect of applying such
transformations on the lengths of proofs has not received attention.

Taking a first step into this direction, we now show that even simple
transformation rules may have a drastic negative effect on proof
complexity. 
Consider the local transformation rule 
\begin{eqnarray*}
\red(\prog) &=& \prog \setminus \{r \in \prog \mid 
\head(r) \not\in\!\!\!\bigcup_{B\in\body(\prog)}\!\!(B^+\union B^-)
\mbox{ and } \head(r)\ne\bot\}.
\end{eqnarray*}
A polynomial-time simplification algorithm $\red^*(\prog)$ is obtained
by closing program~$\prog$ under $\red$.  
Notice that we have $\red^*(\EPHP) = \PHP$.
Thus, by Theorem~\ref{separate}, $\red^*$ transforms a
program family having  polynomial complexity in ASP Tableaux into one 
with exponential complexity with respect to $n$.

The rules removed by $\red^*$ are redundant with respect to
satisfiability of the program in the sense that $\red^*$ 
preserves \emph{visible
  equivalence}~\cite{DBLP:journals/jancl/Janhunen06}. 
The visible equivalence relation takes the interfaces of programs into
account: $\atoms(\prog)$ is partitioned into $\visible(\prog)$ and
$\hidden(\prog)$ determining the {\em visible} and the {\em hidden}
atoms in $\prog$, respectively. Programs $\prog_1$ and
$\prog_2$ are visibly equivalent, denoted by $\prog_1\lpeq{v}\prog_2$,
if and only if $\visible(\prog_1) = \visible(\prog_2)$ and there is a
bijective correspondence between the stable models of $\prog_1$ and
$\prog_2$ mapping each $a\in\visible(\prog_1)$ onto itself.
Now if one defines $\visible(\prog)=\atoms(\red^*(\prog))=
\visible(\red^*(\prog))$, that is, assuming that the atoms removed by
$\red^*$ are hidden in $\prog$, one can see that
$\red^*(\prog)\lpeq{v}\prog$.
Hence, even though there is a bijective correspondence between the
stable models of $\EPHP$ and $\red^*(\EPHP) = \PHP$, $\red^*$ causes a
superpolynomial blow-up in the length of proofs in $\ASPT$ and the
related solvers, if applied before actually proving $\EPHP$.


\section{Experiments}
\label{experiments}
We experimentally evaluate how well current state-of-the-art ASP
solvers can make use of the additional structure 
introduced to programs  using the extension rule.
For the experiments, we ran 
the solvers~\footnote{We note that the detailed results
  reported here differ somewhat from those reported in the conference
  version of this work~\cite{JarvisaloO:ICLP07}. This is 
due to the fact that, for the current
  article, we used more recent versions of the solvers.}
$\smodels$~\cite{DBLP:journals/ai/SimonsNS02} (version~2.33, a
widely used lookahead solver), 
$\clasp$~\cite{clasp} (version~1.1.0,  with many
techniques---including conflict learning---adopted from DPLL-based SAT
solvers), 
and 
$\cmodels$~\cite{GiunchigliaLM:answer} (version~3.77, a SAT-based ASP
solver running the  conflict-learning SAT solver 
zChaff~\cite{DBLP:conf/dac/MoskewiczMZZM01} version~2007.3.12 as the
back-end). 
The experiments were run on standard PCs with 2-GHz AMD 3200+
processors under Linux.
Running times were measured using \texttt{/usr/bin/time}.

First, we investigate whether ASP solvers are able to
benefit from the extension in $\EPHP$.
We compare the number of decisions and running times of each of the
solvers on $\PHP$, $\CPHP = \PHP \cup \bigcup_{1< l \le n} \EXT$, and
$\EPHP$. 
By Theorem~\ref{separate} the solvers should in theory be able to 
exhibit polynomially scaling numbers of decisions for $\EPHP$.
In fact with conflict-learning this might also be possible for $\CPHP$
due to the tight correspondence with conflict-learning SAT solvers and
$\RES$~\cite{Beame:understanding}.
The results for $n = 10\ldots12$ are shown in Table~\ref{tab}.
\begin{table}[b]
\caption{Results on $\PHP$, $\CPHP$, and $\EPHP$ 
with timeout (-) of 2 hours.}
\label{tab}
\begin{minipage}{\textwidth}
\begin{tabular}{cccccccc}
\hline
\hline
\multicolumn{2}{c}{} & \multicolumn{3}{c}{\textbf{Time (s)}}&
\multicolumn{3}{c}{\textbf{Decisions}}\\ \cline{3-5} \cline{6-8}
\\
\textbf{Solver} & $n$ &  $\PHP$ & $\CPHP$ & $\EPHP$
& $\PHP$ & $\CPHP$ & $\EPHP$\\
\hline
\hline
$\smodels$ & $10$ &34.02& 119.69& 8.65& 164382 &144416 &0 \\
$\smodels$ & $11$ &486.44& 1833.48& 21.70&1899598 & 1584488&0 \\
$\smodels$ & $12$ &-& - & 49.28& - & -&  0\\
\hline
$\clasp$ & $10$ & 6.81 & 7.29& 10.05 & 337818 & 216894&38863 \\
$\clasp$ & $11$ &58.48& 45.00& 82.07& 1840605 & 882393& 203466\\
$\clasp$ & $12$ &579.28 & 509.43& 941.23& 12338982 & 6434939& 1467623 \\
\hline
$\cmodels$ & $10$ &1.60 &1.69 &7.87 & 8755 & 8579& 12706\\
$\cmodels$ & $11$ &8.20& 8.51& 43.96 & 24318& 23758& 42782\\
$\cmodels$ & $12$ &46.33&54.26 & 122.72& 88419& 94917& 88499\\
\hline
\hline
\end{tabular}
\vspace{-2\baselineskip} 
\end{minipage}
\end{table}
While the number of decisions for the conflict-learning solvers
$\clasp$ and $\cmodels$ is somewhat reduced by the extensions, the
solvers do not seem to be able to reproduce the polynomial-length
proofs, and we do not observe a dramatic change in the running times.
With a timeout of 2 hours, $\smodels$ gives no answer for $n=12$ on 
$\PHP$ or $\CPHP$.
However, for $\EPHP$ $\smodels$ returns without any branching, which
is due to the fact that $\smodels$' complete lookahead notices
that by branching on the critical extension atoms (as in part (ii) of the
proof of Theorem~\ref{separate}) the $\F$ branch 
becomes contradictory immediately.
With this in mind, an interesting further study out of the scope of
this work would be the possibilities of integrating conflict learning
techniques with (partial) lookahead. 

In the second experiment, we study the effect of having a
modest number of redundant rules on the behavior of ASP
solvers.
For this we apply the procedure
$\textsc{AddRandomRedundancy}(\prog,n,p)$ shown in
Algorithm~\ref{random-redundancy}.
Given a program $\prog$, the procedure iteratively 
adds rules of the
form $r_i \IF l_1,l_2$ to $\prog$, where $l_1,l_2$ are random default
literals currently in the program and $r_i$ is a new atom.
The number of introduced rules is $p$\%  of the integer $n$.

\begin{algorithm}[!ht]
{\caption{$\textsc{AddRandomRedundancy}(\prog,n,p)$}\label{random-redundancy}}
\begin{enumerate}
\item[1.] \textbf{For} $i = 1$ \textbf{to} $\lfloor \frac{p}{100}n\rfloor$:
\begin{enumerate}
\item[1a.] Randomly select $l_1,l_2 \in \dlits(\prog)$ such that 
$l_1 \not =  l_2$.
\item[1b.] 
$\prog := \prog \cup \{r_i \IF l_1,l_2\}$, where $r_i \not
  \in \atoms(\prog)\union\{\bot\}$.
\end{enumerate}
\item[2.] \textbf{Return} $\prog$
\end{enumerate}
\end{algorithm}

\begin{figure}[!b]
\begin{center}
\begin{tabular}{c}
\begin{minipage}{12cm}
\begin{tabular}{cc}
\epsfxsize=6cm
\epsfbox{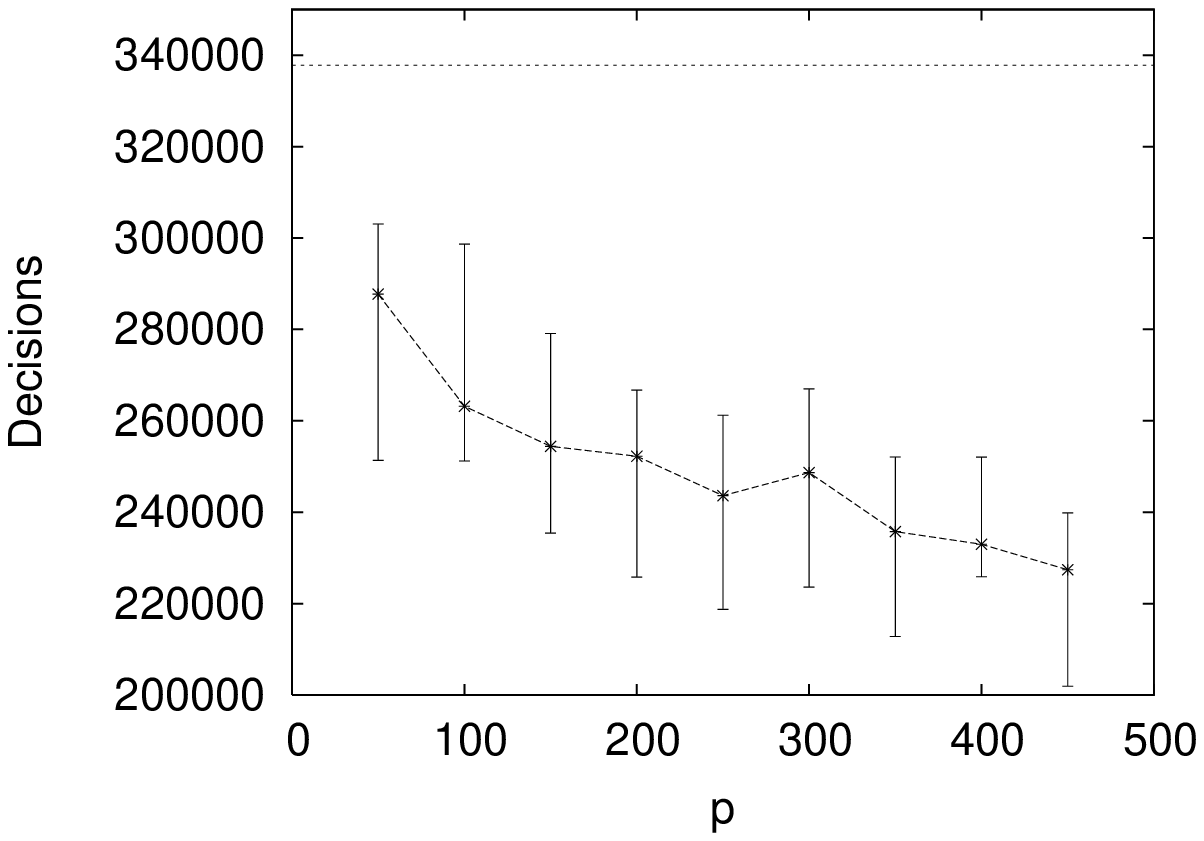}
& 
\epsfxsize=6cm
\epsfbox{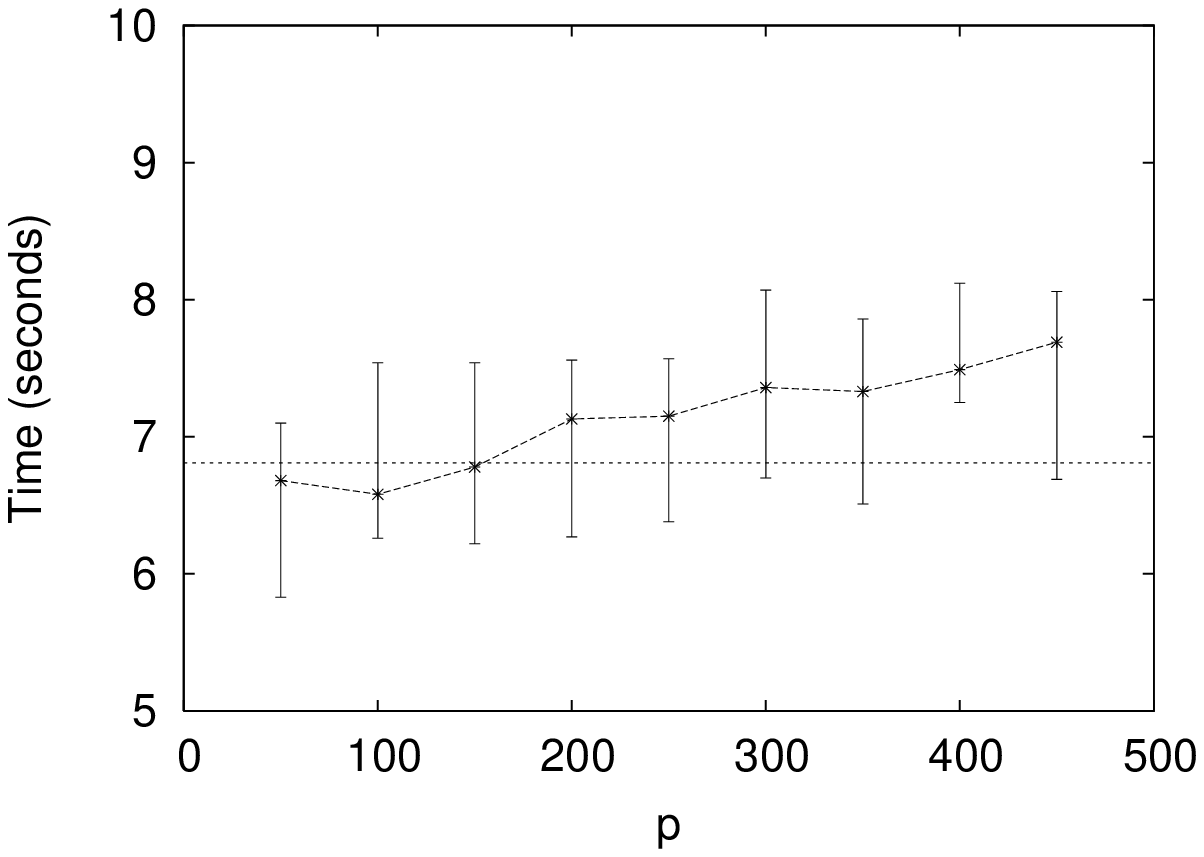}
\end{tabular}
\end{minipage}
\\
(a) $\clasp$ decisions (left), time in seconds (right)
\\
\begin{minipage}{12cm}
\begin{tabular}{cc}
\epsfxsize=6cm
\epsfbox{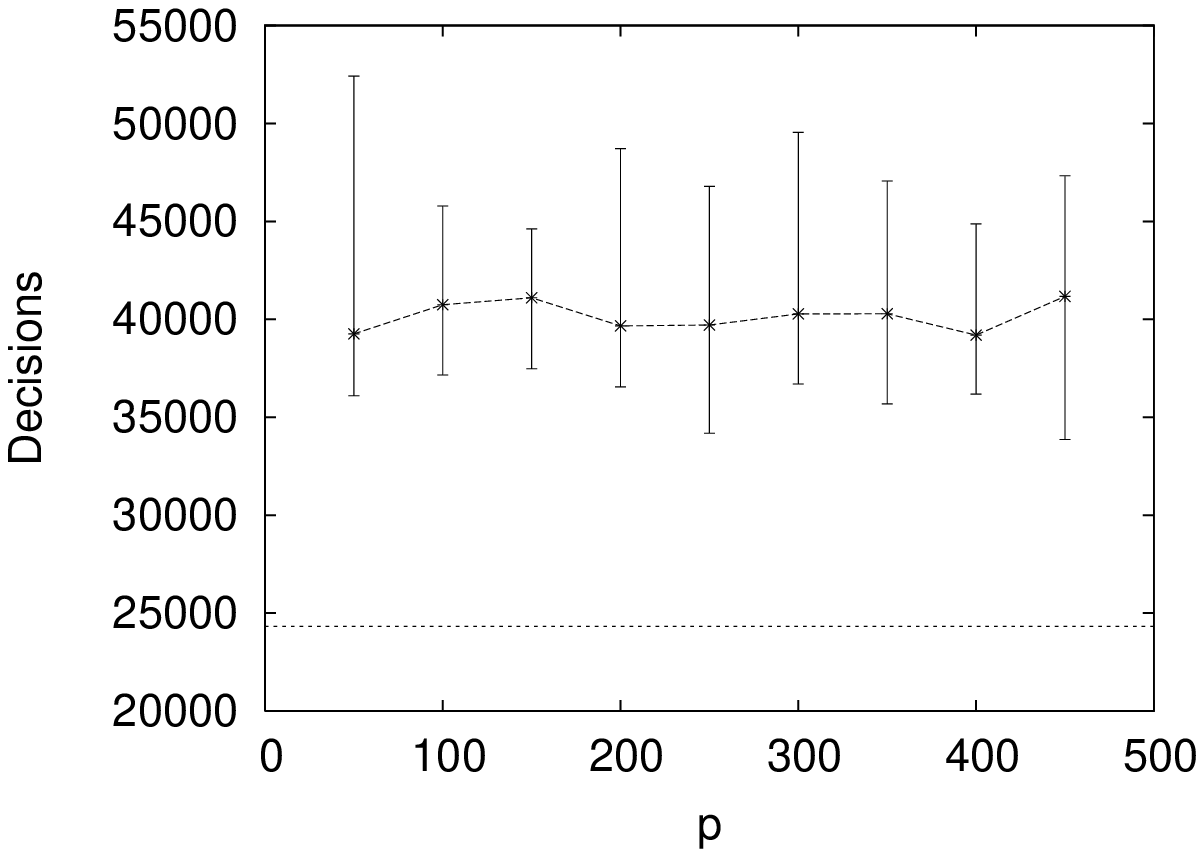}
& 
\epsfxsize=6cm
\epsfbox{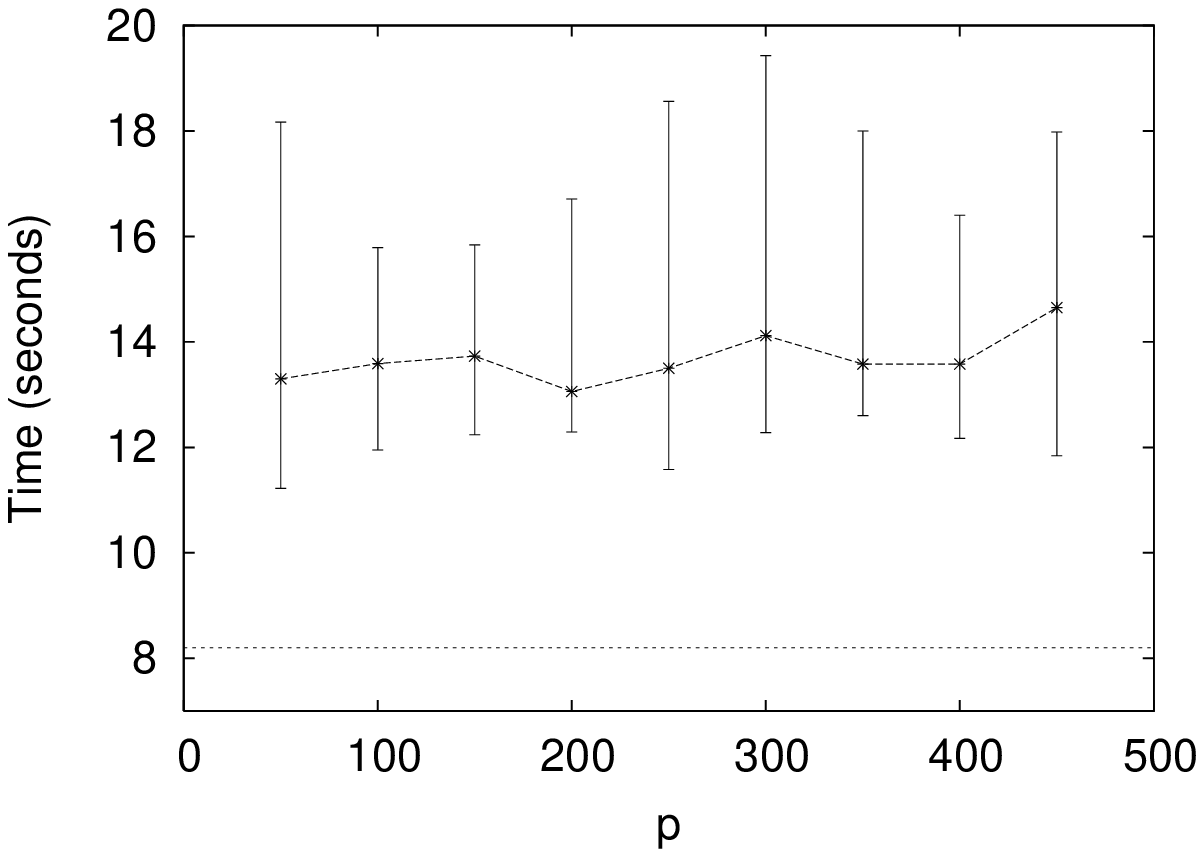}
\end{tabular}
\end{minipage}
\\
(b) $\cmodels$ decisions (left), time in seconds (right)
\\
\begin{minipage}{12cm}
\begin{tabular}{cc}
\epsfxsize=6cm
\epsfbox{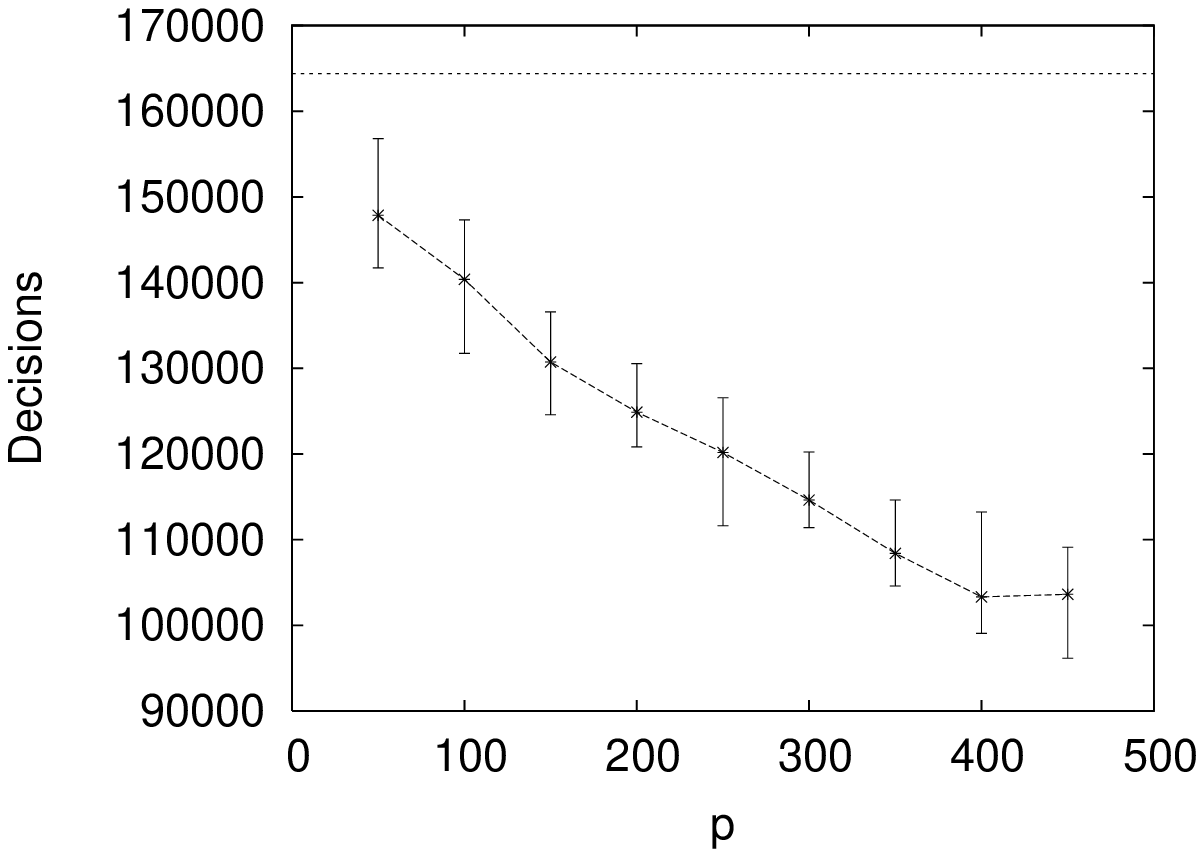}
& 
\epsfxsize=6cm
\epsfbox{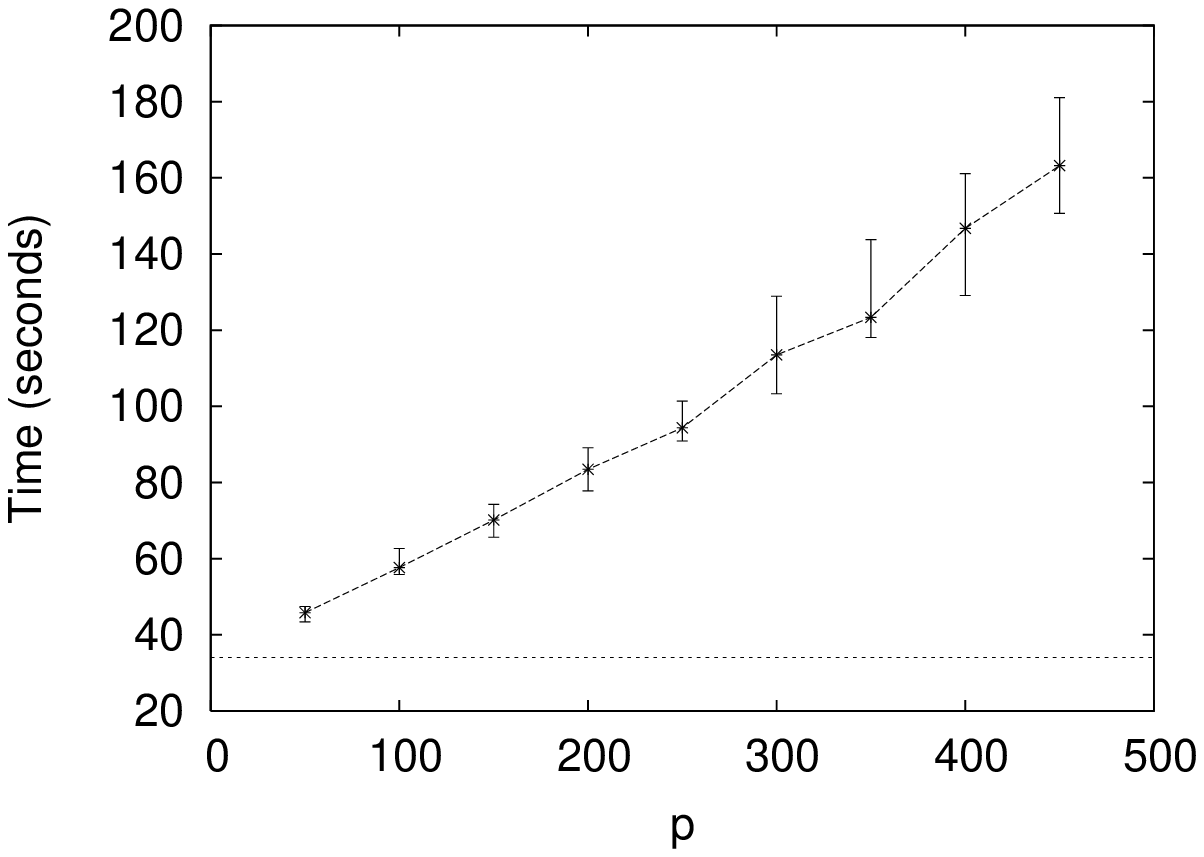}
\end{tabular}
\end{minipage}
\\
(c) $\smodels$ decisions (left), time in seconds (right)
\\
\begin{minipage}{12cm}
\begin{tabular}{cc}
\epsfxsize=6cm
\epsfbox{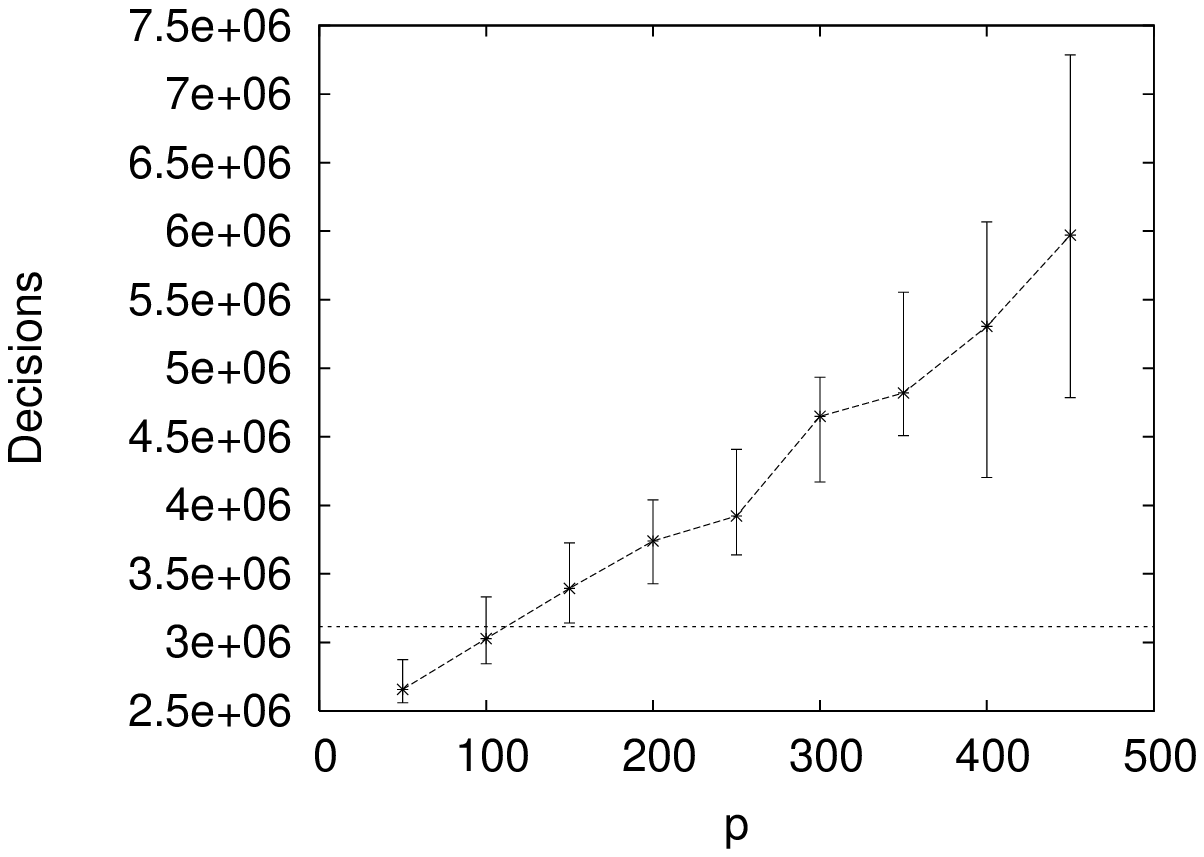}
& 
\epsfxsize=6cm
\epsfbox{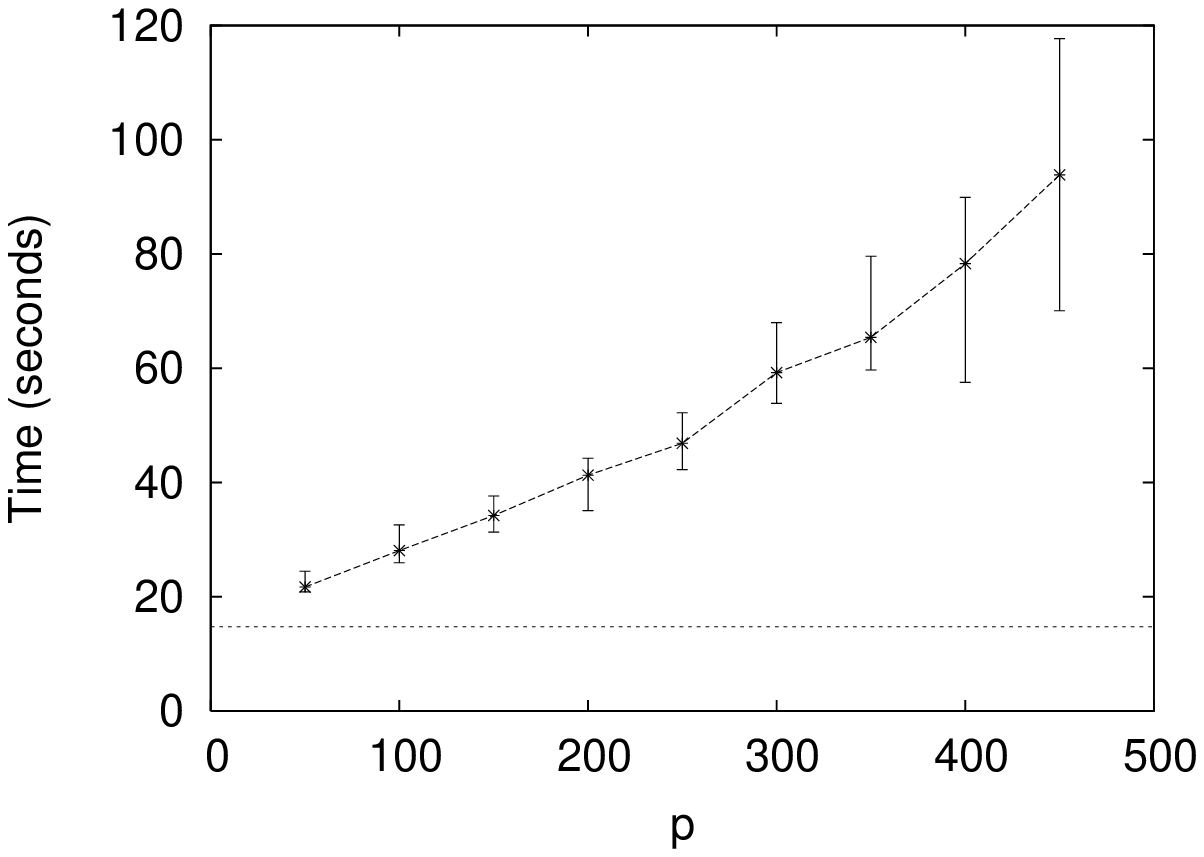}
\end{tabular}
\end{minipage}
\\
(d) $\smodels$ without lookahead: decisions (left), time in seconds (right)
\end{tabular}
\caption{Effects of adding randomly generated redundant rules to $\PHP$.}
\label{plot-rand}
\end{center}
\end{figure}

In Figure~\ref{plot-rand}, the median, minimum, and maximum number of
decisions and running times for the solvers on 
$\textsc{AddRandomRedundancy}(\PHP,n,p)$
are shown for $p=50,100,\ldots,450$  over 15 trials for each value of
$p$. 
The mean number of decisions (left) and running times (right) on the
original $\PHP$ are presented by the horizontal lines.
Notice that the number of added atoms and rules is linear to~$n$,
which is negligible 
to the number of atoms (in the order of  $n^2$) and rules ($n^3$) in $\PHP$.
For similar running times, the number of holes $n$ is $10$ for $\clasp$ and
$\smodels$ and $11$ for $\cmodels$. 
The results are very interesting: each of the solvers seems to react
individually to the added redundancy.
For $\cmodels$ (b), only a few added redundant rules are enough
to worsen its behavior.
For $\smodels$ (c), the number of decisions decreases linearly with
the number of added rules. 
However, the running times grow fast at the same time, most likely
due to $\smodels$' lookahead.
We also ran the experiment for $\smodels$ without using lookahead (d).
This had a visible effect on the number of decisions compared to
$\smodels$ on $\PHP$.

The most interesting effect is seen for $\clasp$ (a); $\clasp$
benefits from the added rules with respect to the number of decisions,
while the running times stay similar on the average, contrarily to the
other solvers. 
In addition to this robustness against redundancy, we believe that this
shows promise for further exploiting redundancy added in a controlled
way during search; the added rules give new possibilities to branch on
definitions which were not available in the original program.
However, for benefiting from redundancy with running times in mind,
optimized lightweight propagation mechanisms are essential.

As a final remark, an interesting observation is that the effect of the 
transformation presented in~\cite{DBLP:conf/ecai/AngerGJS06}, which
enables $\smodels$ to branch on the bodies of rules, having an
exponential effect on the proof complexity of a particular program
family, can be equivalently obtained by applying the ASP extension
rule.  
This may in part explain the effect of adding redundancy on the number
of decision made by $\smodels$.


\section{Conclusions}
We introduce  Extended ASP Tableaux, an extended tableau calculus for
normal logic programs under the stable model semantics.
We study the strength of the calculus, showing a tight
correspondence with Extended Resolution, which is among the most
powerful known propositional proof systems.
This sheds further light on the relation
of ASP and propositional satisfiability solving and their
underlying proof systems,  which 
we believe to be for the benefit of both of the communities.

Our experiments  show 
the intricate nature of the interplay between redundant problem
structure and the hardness of solving ASP instances.
We conjecture that more systematic use of the extension rule 
is possible and may even yield performance gains 
by considering in more detail the structural
properties of programs in particular problem domains.
One could also consider implementing branching on any possible formula
\emph{inside} a solver.
However, this would require novel heuristics, since choosing
the formula to branch on from the exponentially many alternatives is
nontrivial and is not applied in current solvers.
We find this an interesting future direction of research.
Another important research direction set forth by this
study is a more in-depth investigation into the effect of program
simplification on the hardness of solving ASP instances.


\section{Acknowledgements}

The authors thank Ilkka Niemelä for comments on a manuscript of this
article. Financial support from Helsinki Graduate School in Computer
Science and Engineering, Academy of Finland (grants \#211025 and
\#122399), Emil Aaltonen Foundation, Nokia Foundation, 
Finnish Foundation for Technology Promotion TES, Jenny and Antti
Wihuri Foundation (MJ), and Finnish Cultural Foundation (EO)
is gratefully acknowledged.


\end{document}